\documentclass{article} 
\usepackage{iclr2023_conference,times}


\usepackage{amsmath,amsfonts,bm}









\def\eqref#1{equation~\ref{#1}}









\def\1{\bm{1}}










\DeclareMathAlphabet{\mathsfit}{\encodingdefault}{\sfdefault}{m}{sl}
\SetMathAlphabet{\mathsfit}{bold}{\encodingdefault}{\sfdefault}{bx}{n}













\usepackage{hyperref}
\usepackage{url}

\usepackage[T1]{fontenc}
\usepackage{amsmath,amsthm}
\usepackage{amssymb,mathrsfs}
\usepackage{bm}
\usepackage{graphicx}
\usepackage{booktabs}
\usepackage{wrapfig}
\usepackage{subcaption}
\usepackage{graphicx}
\usepackage{cases}
\usepackage{stackengine}
\usepackage[ruled,linesnumbered,norelsize]{algorithm2e}
\usepackage{makecell}
\usepackage{longtable}
\usepackage{tabularx,ragged2e}
\usepackage{tikz}
\usepackage{enumitem}
\usepackage{multirow}

\usepackage{apptools}
\AtAppendix{\counterwithin{Lemma}{section}}

\AtAppendix{\counterwithin{Theorem}{section}}

\AtAppendix{\counterwithin{Proposition}{subsection}}
\newtheorem{Proposition}{Proposition}
\newcommand\barbelow[1]{\stackunder[1pt]{$#1$}{\rule{0.9ex}{.085ex}}}

\title{Population-size-Aware Policy Optimization\\ for Mean-Field Games}

\author{Pengdeng Li$^1$ \quad Xinrun Wang$^1$\thanks{Corresponding author.} \quad Shuxin Li$^1$ \quad Hau Chan$^2$ \quad Bo An$^1$ \\
$^1$Nanyang Technological University, Singapore \quad $^2$University of Nebraska, Lincoln, USA \\
\texttt{\{pengdeng.li,xinrun.wang,shuxin.li,boan\}@ntu.edu.sg, hchan3@unl.edu} \\
}

%

\iclrfinalcopy 
\begin{document}

\maketitle

\begin{abstract}
In this work, we attempt to bridge the two fields of finite-agent and infinite-agent games, by studying how the optimal policies of agents evolve with the number of agents (population size) in mean-field games, an agent-centric perspective in contrast to the existing works focusing typically on the convergence of the empirical distribution of the population. To this end, the premise is to obtain the optimal policies of a set of finite-agent games with different population sizes. However, either deriving the closed-form solution for each game is theoretically intractable, training a distinct policy for each game is computationally intensive, or directly applying the policy trained in a game to other games is sub-optimal. We address these challenges through the \textbf{P}opulation-size-\textbf{A}ware \textbf{P}olicy \textbf{O}ptimization (PAPO). Our contributions are three-fold. First, to efficiently generate efficient policies for games with different population sizes, we propose PAPO, which unifies two natural options (augmentation and hypernetwork) and achieves significantly better performance. PAPO consists of three components: i) the population-size encoding which transforms the original value of population size to an equivalent encoding to avoid training collapse, ii) a hypernetwork to generate a distinct policy for each game conditioned on the population size, and iii) the population size as an additional input to the generated policy. Next, we construct a multi-task-based training procedure to efficiently train the neural networks of PAPO by sampling data from multiple games with different population sizes. Finally, extensive experiments on multiple environments show the significant superiority of PAPO over baselines, and the analysis of the evolution of the generated policies further deepens our understanding of the two fields of finite-agent and infinite-agent games. 
\end{abstract}

\section{Introduction}
\begin{wrapfigure}{r}{0.39\textwidth}
\vspace{-14pt}
\centering
\includegraphics[width=0.39\textwidth]{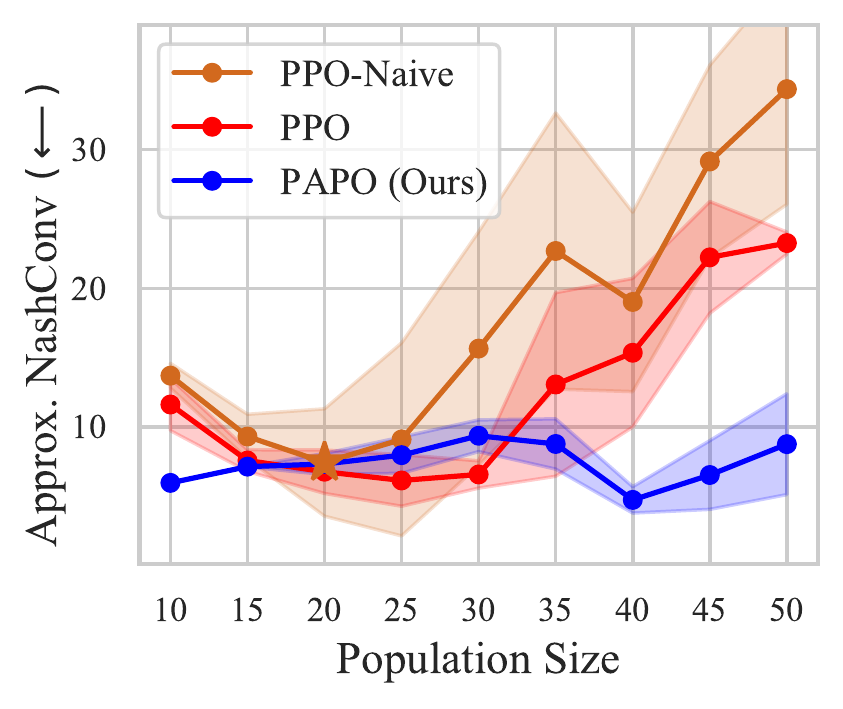}
\caption{Experiments on Taxi Matching environment show the failure of two naive methods and the success of our PAPO. $\downarrow$ means the lower the better performance. See Sec.~\ref{sec:main-results} for details.}
\label{fig:motivation}
\vspace{-12pt}
\end{wrapfigure}
Games involving a finite number of agents have been extensively investigated, ranging from board games such as Go~\citep{silver2016mastering,silver2018general}, Poker~\citep{brown2018superhuman,brown2019superhuman,moravvcik2017deepstack}, and Chess~\citep{campbell2002deep} to real-time strategy games such as StarCraft II~\citep{vinyals2019grandmaster} and Dota 2~\citep{berner2019dota}. However, existing works are typically limited to a handful of agents, which hinders them from broader applications. To break the curse of many agents~\citep{wang2020breaking}, mean-field game (MFG)~\citep{huang2006large,lasry2007mean} was introduced to study the games that involve an infinite number of agents. Recently, benefiting from reinforcement learning (RL)~\citep{sutton2018reinforcement} and deep RL~\citep{lillicrap2016continuous,mnih2015human}, MFG provides a versatile framework to model games with large population of agents~\citep{cui2022learning,fu2019actor,guo2019learning,lauriere2022scalable,perolat2021scaling,perrin2021generalization,perrin2020fictitious,yang2018learning}.

Despite the successes of finite-agent games and infinite-agent games, the two fields are largely evolving independently. Establishing the connection between an MFG and the corresponding finite-agent Markov (stochastic) games\footnote{In this work, we focus on the finite-agent Markov games sharing a similar structure with the MFG, see Sec.~\ref{sec:preliminaries} and Appendix~\ref{app:connection-MG-MFG} for more details and discussion.}\label{fn:markov-game} has been a research hotspot and it is done typically by the convergence of the empirical distribution of the population to the mean-field~\citep{saldi2018markov,cui2022learning,cui2022hypergraphon,fabian2022learning}. However, few results have been achieved from an agent-centric perspective. Specifically, a fundamental question is: how do the optimal policies of agents evolve with the population size? As the population size increases, the finite-agent games approximate, though never equal to, their infinite-agent counterparts~\citep{cui2021approximately,mguni2018decentralised}. Therefore, the solutions returned by methods in finite-agent games should be consistent with that returned by methods in infinite-agent games. As we can never generate finite-agent games with infinite number of agents, we need to investigate the evolution of the optimal policies of agents, i.e., scaling laws\footnote{We use the term ``scaling laws'' to refer to the evolution of agents' optimal policies with the population size, which is different from that of~\citep{kaplan2020scaling}. See Appendix~\ref{app:discussion-on-scaling-laws} for a more detailed discussion.}, to check the consistency of the methods. However, theoretically investigating the scaling laws is infeasible, as obtaining the closed-form solutions of a set of finite-agent games is typically intractable except for some special cases~\citep{guo2018stochastic}. Hence, another natural question is: how to efficiently generate efficient policies for a set of finite-agent games with different population sizes? Most methods in finite-agent games can only return the solution of the game with a given number of agents~\citep{bai2020provable,jia2019feature,littman2001friend}. Unfortunately, the number of agents varies dramatically and rapidly in many real-world scenarios. For example, in Taxi Matching environment~\citep{nguyen2018credit,alonso2017demand}, the number of taxis could be several hundred in rush hour while it could be a handful at midnight. Fig.~\ref{fig:motivation} demonstrates the failure of two naive options in this environment: i) directly apply the policy trained in a given population size to other population sizes (PPO-Naive), and ii) train a policy by using the data sampled from multiple games with different population sizes (PPO). Furthermore, computing the optimal policies for games with different population sizes is computationally intensive.

In this work, we propose a novel approach to efficiently generate efficient policies for games with different population sizes, and then investigate the scaling laws of the generated policies. Our main contributions are three-fold. First, we propose PAPO, which unifies two natural methods: augmentation and hypernetwork, and thus, achieves better performance. Specifically, PAPO consists of three components: i) the population-size encoding which transforms the original value of population size to an equivalent encoding to avoid training collapse, ii) a hypernetwork to generate a distinct policy for each game conditioned on the population size, and iii) the population size as an additional input to the generated policy. Next, to efficiently train the neural networks of PAPO, we construct a multi-task-based training procedure where the networks are trained by using the data sampled from games with different population sizes. Finally, extensive experiments on multiple widely used game environments demonstrate the superiority of PAPO over several naive and strong baselines. Furthermore, with a proper similarity measure (centered kernel alignment~\citep{kornblith2019similarity}), we show the scaling laws of the policies generated by PAPO, which deepens our understanding of the two fields of finite-agent and infinite-agent games. By establishing the state-of-the-art for bridging the two research fields, we believe that this work contributes to accelerating the research in both fields from a new and unified perspective. 

\section{Related Works\label{sec:related-works}}

Our work lies in the intersection of the two research fields: learning in Markov games (MGs) and learning in mean-field games (MFGs). Numerous works have tried to study the connection between an MFG and the corresponding finite-agent MGs from a theoretical or computational viewpoint such as~\citep{saldi2018markov,doncel2019discrete,cabannes2021solving}, to name a few. The general result achieved is either that the empirical distribution of the population converges to the mean-field as the number of players goes to infinity or that the Nash equilibrium (NE) of an MFG is an approximate NE of the finite-agent game for a sufficiently large number of players, under different conditions such as the Lipschitz continuity of reward/cost and transition functions~\citep{saldi2018markov,cui2021approximately,cui2022learning,cui2022hypergraphon} or/and the convergence of the sequence of step-graphons~\citep{cui2022learning,cui2022hypergraphon}. Though the advancements in these works provide a theoretical or computational understanding of the connection between the two fields, very few results have been achieved from an agent-centric perspective. More precisely, we aim to answer a fundamental question: how do the optimal policies of agents evolve with the population size? Toward this direction, the most related work is~\citep{guo2018stochastic}, which proves the convergence of the NE of the finite-agent game to that of the MFG by directly comparing the NEs. However, it requires the closed-form solutions of both the finite-agent game and the MFG to be computable, where extra conditions such as a convex and symmetric instantaneous cost function and c\`{a}dl\`{a}g (bang-bang) controls are needed. In a more general sense, deep neural networks have been widely adopted to represent the policies of agents due to their powerful expressiveness~\citep{lauriere2022scalable,perolat2021scaling,perrin2021generalization,yang2018learning}. In this sense, to our best knowledge, our work is the first attempt to bridge the two research fields of finite-agent MGs and MFGs. Specifically, we identify the critical challenges (theoretical intractability, computational difficulty, and sub-optimality of direct policy transfer) and propose novel algorithms and training regimes to efficiently generate efficient policies for finite-agent games with different population sizes and then investigate the scaling laws of the policies. For more discussion and related works on the two fields, see Appendices~\ref{app:connection-MG-MFG} and~\ref{app:more-related-works}. 

Our work is also related to Hypernetwork~\citep{ha2016hypernetworks}. A Hypernetwork is a neural network designed to produce the parameters of another network. It has gained popularity recently in a variety of fields such as computer vision~\citep{brock2018smash,jia2016dynamic,littwin2019deep,potapov2018hypernets} and RL~\citep{huang2021continual,rashid2018qmix}. In~\citep{sarafian2021recomposing}, Hypernetwork was used to map the state or task context to the Q or policy network parameters. Differently, in addition to generating a distinct efficient policy for each population size by using Hypernetwork, we also investigate the scaling laws of the generated policies, which has not been done in the literature.

\section{Preliminaries\label{sec:preliminaries}}
In this section, we present the game models (finite-agent Markov game and infinite-agent mean-field game) and problem statement of this work.

\textbf{Finite-agent Markov Game}. A Markov Game (MG) with $N < \infty$ homogeneous agents is denoted as $G(N) = (\mathcal{N}, \mathcal{S}, \mathcal{A}, p, r, \mathcal{T})$. $\mathcal{N} = \{1, \cdots, N\}$ is the set of agents. $\mathcal{S}$ and $\mathcal{A}$ are respectively the shared state and action spaces. At each time step $t \in \mathcal{T} = \{0, 1, \cdots, T\}$, an agent $i$ in state $s_t^i$ takes an action $a_t^i$. Let $z_{\bm{s}_t}^N$ denote the empirical distribution of the agent states $\bm{s}_t = (s_t^1, \cdots, s_t^N)$ with $z_{\bm{s}_t}^N(s) = \frac{1}{N}\sum_{i=1}^N \bm{1}\{s_t^i = s\}$, for all $s \in \mathcal{S}$. Then, $z_{\bm{s}_t}^N \in \Delta(\mathcal{S})$, the probability distribution over $\mathcal{S}$. For agent $i$, given the state $s_t^i$, action $a_t^i$, and state distribution $z_{\bm{s}_t}^N$, its dynamical behavior is described by the state transition function $p: \mathcal{S} \times \mathcal{A} \times \Delta(\mathcal{S}) \to \Delta(\mathcal{S})$ and it receives a reward of $r(s_t^i, a_t^i, z_{\bm{s}_t}^N)$. Let $\pi^i: \mathcal{S} \times \mathcal{T} \to \Delta(\mathcal{A})$ be the policy\footnote{In experiments, we follow~\citep{lauriere2022scalable} to make the policy dependent on time by concatenating the state with time.} of agent $i$. Accordingly, $\bm{\pi} = (\pi^i)_{i \in \mathcal{N}}$ is the joint policy of all agents and $\bm{\pi}^{-i} = (\pi^j)_{j \in \mathcal{N}, j \ne i}$ is the joint policy of all agents except $i$. Given the initial joint state $\bm{s}$ and joint policy $\bm{\pi}$, the value function for agent $1 \leq i \leq N$ is given by
\begin{equation}
    V^i(s^i, \pi^i, \bm{\pi}^{-i}) = \mathbb{E}\Bigg[ \sum_{t=0}^T r(s_t^i, a_t^i, z_{\bm{s}_t}^N) \Big| s_0^i = s^i, s_{t+1}^i \sim p, a_t^i \sim \pi^i \Bigg].
\end{equation}
A joint policy $\bm{\pi}^* = (\pi^{i,*})_{i \in \mathcal{N}}$ is called a Nash policy if an agent has no incentive to unilaterally deviate from others: for any $i \in \mathcal{N}$, $V^i(s^i, \bm{\pi}^*) \ge \max_{\pi^i} V^i(s^i, \pi^i, \bm{\pi}^{-i,*})$. Given a joint policy $\bm{\pi}$, the $\textsc{NashConv}(\bm{\pi}) = \sum_{i=1}^N \max_{\hat{\pi}^i} V^i(s^i, \hat{\pi}^i, \bm{\pi}^{-i}) - V^i(s^i, \bm{\pi})$ measures the ``distance'' of $\bm{\pi}$ to the Nash policy. That is, $\bm{\pi}$ is a Nash policy if $\textsc{NashConv}(\bm{\pi}) = 0$.

\textbf{Mean-Field Game}. A Mean-Field Game (MFG) consists of the same elements as the finite-agent MG except that $N = \infty$, which is denoted as $G(\infty) = (\mathcal{S}$, $\mathcal{A}$, $p$, $r$, $\mathcal{T})$. In this case, instead of modeling $N$ separate agents, it models a single representative agent and collapses all other (infinite number of) agents into the mean-field, denoted by $\mu_t \in \Delta(\mathcal{S})$. As the agents are homogeneous, the index $i$ is suppressed. Consider a policy $\pi$ as before. Given some fixed mean-field $(\mu_t)_{t \in \mathcal{T}}$ of the population and initial state $s \sim \mu_0$ of the representative agent, the value function for the agent is
\begin{equation}
    V(s, \pi, (\mu_t)_{t \in \mathcal{T}}) = \mathbb{E}\Bigg[ \sum_{t=0}^T r(s_t, a_t, \mu_t) \Big| s_0 = s, s_{t+1} \sim p, a_t \sim \pi \Bigg].
\end{equation}
$\pi^*$ is called a Nash policy if the representative agent has no incentive to unilaterally deviate from the population~\citep{perolat2021scaling,perrin2020fictitious}: $V(s, \pi^*, (\mu_t^*)_{t \in \mathcal{T}}) \ge \max_{\hat{\pi}} V(s, \hat{\pi}, (\mu_t^*)_{t \in \mathcal{T}})$, where $(\mu_t^*)_{t \in \mathcal{T}}$ is the mean-field of the population following $\pi^*$. Given a policy $\pi$, the NashConv is defined as $\textsc{NashConv}(\pi) = \max_{\hat{\pi}} V(s, \hat{\pi}, (\mu_{t})_{t \in \mathcal{T}}) - V(s, \pi, (\mu_{t})_{t \in \mathcal{T}})$ with $(\mu_{t})_{t \in \mathcal{T}}$ being the mean-field of the population following $\pi$. Then, $\pi$ is a Nash policy if $\textsc{NashConv}(\pi) = 0$.

\textbf{Problem Statement}. Though numerous advancements have been achieved for the finite-agent MGs and infinite-agent MFGs, the two fields are largely evolving independently (we provide more detailed discussions on the connection between MGs and MFGs in Appendix~\ref{app:connection-MG-MFG}). Bridging the two fields can contribute to accelerating the research in both fields. There are two closely coupled questions: how do the optimal policies of agents evolve with the population size, i.e., scaling laws, and how to efficiently generate efficient policies for finite-agent MGs with different population sizes? Formally, let $G = \left\{G(\barbelow{N}), \cdots, G(N), \cdots, G(\bar{N})\right\}$ denote a set of MGs, where $\barbelow{N}$ and $\bar{N}$ denote the minimum and maximum number of agents. Let $\pi_N^*$ denote the Nash policy of a game $G(N)$. Let $\rho(N) = \rho(\pi_N^*, \pi_{N+1}^*)$ denote some measure capturing the difference between the Nash policy of $G(N)$ and that of $G(N+1)$ (see Appendix~\ref{app:discussion-on-similarity-measure} for more discussion on $\rho$). Then, one question is how $\rho(N)$ changes with $N$. However, directly investigating the evolution of $\rho(N)$ is infeasible, as we need to obtain the Nash policy $\pi_N^*$ for each game $G(N)$. In addition, directly applying the Nash policy $\pi_N^*$ to a game $G(N')$ with $N' \ne N$ could result in worse (or arbitrarily worse) performance, i.e., large (or arbitrarily large) $\textsc{NashConv}(\pi_N^*)$ for $G(N')$, as shown in Fig.~\ref{fig:motivation}. Thus, another question is how to efficiently generate a policy $\pi_N$ that works well for each game $G(N) \in G$, i.e., small $\textsc{NashConv}(\pi_N)$, though it may not be the Nash policy $\pi_N \ne \pi_N^*$. Unfortunately, most existing methods can only return the Nash policy for the game with a given $N$, which hinders them from many real-world applications where the number of agents varies dramatically and rapidly, e.g., the number of taxis could be several hundred in rush hour while it could be a handful at midnight~\citep{alonso2017demand}. Furthermore, learning the Nash policies of all the games in $G$ is computationally intensive. Therefore, our objective is to develop novel methods that can efficiently generate efficient policies\footnote{We call a policy an ``efficient policy'' if it has a small NashConv in the corresponding game.} for the games in $G$ and enable us to investigate the scaling laws. 

\section{Population-size-Aware Policy Optimization}
We start by introducing the basic structure of the policy network of a representative agent. Let $\pi_{\bm{\theta}}$ denote the agent's policy parameterized by $\bm{\theta} \in \Theta$. Note that $\bm{\theta}$ can be any type of parameterization such as direct policy parameterizations~\citep{leonardos2022global} and neural networks~\citep{agarwal2021theory}. In this work, we use a neural network with three layers to represent the policy, i.e., $\bm{\theta}=(\bm{\theta_1}, \bm{\theta_2}, \bm{\theta_3})$ with $\bm{\theta}_l = (\bm{w}_l, \bm{b}_l, \bm{g}_l)$ denoting the vectors of weights ($\bm{w}_l$), biases ($\bm{b}_l$), and scaling factors ($\bm{g}_l$) of the layer $1 \leq l \leq 3$. In practice, $\bm{\theta}$ is typically trained by using RL algorithms (see Sec.~\ref{sec:practical-algo} for practical implementation). However, as mentioned in the problem statement, either training a distinct policy for each game is computationally intractable or naively applying a policy trained under a given $N$ to other games is sub-optimal (as shown in Fig.~\ref{fig:motivation}). To address these challenges, we propose a new policy optimization approach, the \textbf{P}opulation-size-\textbf{A}ware \textbf{P}olicy \textbf{O}ptimization (PAPO), which is shown in Fig.~\ref{fig:PAPO-frameworks}.

\subsection{Population-size Encoding\label{sec:pop-size-encode}}
As PAPO is population-size dependent, one may choose to directly take $N$ as the input (we call it the raw encoding (RE, for short)). However, this might work poorly in practice. In many real-world applications, the population size can be large and vary dramatically and rapidly, e.g., the total number of taxis in Manhattan is around 13,000 and in one day it can fluctuate from a handful at midnight to several hundred in rush hour~\citep{alonso2017demand}. When directly taking population sizes as inputs, bigger $N$'s may have a higher contribution to the network's output error and dominate the update of the network. This could degrade the performance or severely, collapse the training (as observed in experiments where the training of PAPO with RE is collapsed).

To address the aforementioned issue, we propose the population-size encoding to pre-process $N$. We use binary encoding (BE, for short) to equivalently transform the original decimal representation of $N$ into a binary vector with size $k > 0$. Formally, given $N$, we obtain a vector $\bm{e}(N) = [e_1, \cdots, e_k]$ such that $N = \sum_{j=1}^k 2^{k-j} e_j$ where $e_j \in \{0, 1\}$. After that, the encoding is further mapped to the embedding (through an embedding layer parameterized with $\bm{\eta}$) which will be fed to the policy network and hypernetwork as shown in Fig.~\ref{fig:PAPO-frameworks}. With a slight abuse of notation, throughout this work, we simply use $N$ to represent the embedding of $\bm{e}(N)$ if it is clear from the context.
\begin{figure}[t]
    \centering
    \includegraphics[width=0.75\textwidth]{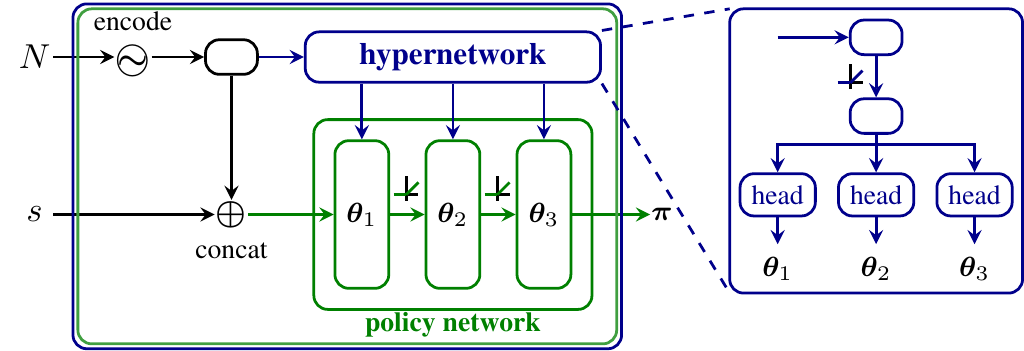}
    \caption{The neural network architecture of PAPO.}
    \label{fig:PAPO-frameworks}
\end{figure}
\subsection{Population-size-Aware Architecture\label{sec:pa-architecture}}
The critical insight of our approach is to leverage the population-size information to generate efficient policies for the games in $G$. In view of the fact that none of the existing works has addressed a similar problem as ours, we first propose two natural methods: augmentation and hypernetwork.

\textbf{Augmentation}. That is, we concatenate $N$ with $s_t$ and then pass the resulting input to the policy. A similar idea has been adopted in the literature to address the problem of policy generalization. For example, in \citep{perrin2021generalization}, the state is concatenated with the initial mean-field to endow the policy with the capability of generalizing across the mean-field space. In our context, by augmenting the state with $N$, the policy can work well across games with different $N$'s.

\textbf{Hypernetwork}. That is, we generate a distinct policy for each game by mapping $N$ to the policy network parameters through a hypernetwork~\citep{ha2016hypernetworks,sarafian2021recomposing}. This choice is completely different from the augmentation as the game-level information (population size in our context) is fully disentangled from the policy input. More specifically, let $h_{\bm{\beta}}$ denote a hypernetwork parameterized with $\bm{\beta}$. $h_{\bm{\beta}}$ takes $N$ as input and outputs a set of parameters (weights, biases, and scaling factors) which will be reshaped to form the structure of the policy network. Formally, we have $\bm{\theta} = h_{\bm{\beta}}(N)$. In the hypernetwork, three independent heads are used to generate the parameters of the three layers of the policy network (more details of the network architecture such as the number of neurons of each layer can be found in Appendix~\ref{app:net-architecture}).

Although the above two methods can obtain efficient policies for the games in $G$, there is no affirmative answer to the question of which one is better as they achieve the goal from different angles: augmentation uses a fixed set of policy network parameters and takes elements from the cartesian product of two domains ($\mathcal{S}$ and $\{2, 3, \cdots\}$) as inputs while the hypernetwork generates a distinct set of policy network parameters for each game. In this work, instead of trying to answer this question, we propose a unified framework, PAPO, which encompasses the two options as two special cases. Specifically, as shown in Fig.~\ref{fig:PAPO-frameworks}, PAPO uses a hypernetwork to generate the parameters of the policy network conditioned on $N$ and also takes $N$ as an additional input to the generated policy.

Intuitively, PAPO preserves the merits of the two special cases and possibly achieves better performance. On one hand, it induces a hierarchical information structure. To a certain extent, it decouples the game-level information from the policy by introducing a hypernetwork to encode the varying elements across different games. In the context of our work, the only difference between the games in $G$ is the population size while other game elements such as the state and action spaces are identical. Thus, PAPO can generate a distinct policy (or in other words, a specialized policy) for each game conditioned on the population size. On the other hand, once the policy is generated, PAPO becomes the augmentation, which, as mentioned before, can work well in the corresponding game.

Furthermore, we can investigate the scaling laws of the generated policies by studying the difference between them. Formally, let $\pi_{\bm{\theta}=h_{\bm{\beta}}(N)}$ and $\pi_{\bm{\theta}=h_{\bm{\beta}}(N+1)}$ be two policies generated by PAPO for the two games with $N$ and $N+1$ agents, respectively. Then, the difference, denoted as $\rho(N) = \rho\left(\pi_{\bm{\theta}=h_{\bm{\beta}}(N)}, \pi_{\bm{\theta}=h_{\bm{\beta}}(N+1)} \right)$, as a function of $N$, characterizes the evolution of the policies with the population size. In practice, one can choose any type of $\rho(N)$. In experiments, we will identify a proper measure (a similarity measure) that can be used to achieve this goal.

\subsection{Practical Implementation\label{sec:practical-algo}}

Notice that PAPO is a general approach. In this section, we specify the practical implementations of the modules in PAPO and construct a training procedure to efficiently train the neural networks.

\textbf{Algorithmic Framework.} We implement PAPO following the framework of PPO~\citep{schulman2017proximal} as it is one of the most popular algorithms for solving decision-making problems. That is, in addition to the policy network $\bm{\theta}$ (i.e., actor), a value network $\bm{\phi}$ (i.e., critic) is used to estimate the value of a state and also generated by a hypernetwork. Then, the networks (hypernetworks and embedding layers) are trained with the following loss function (see Appendix~\ref{app:loss-function} for details):
\begin{equation}
\label{eq:ppo}
    L_t(\bm{\beta}^{\text{A}}, \bm{\beta}^{\text{C}}) = \mathbb{E} \Big[ L_t^1(\bm{\theta}=h_{\bm{\beta}^{\text{A}}}(N)) - c_1 L_t^2(\bm{\phi}=h_{\bm{\beta}^{\text{C}}}(N)) + c_2 \mathcal{H}(\pi_{\bm{\theta}=h_{\bm{\beta}^{\text{A}}}(N)})\Big].
\end{equation}
As the agents learn their policies independently, the policy learning in this work falls in the category of using independent RL to find Nash equilibrium~\citep{ozdaglar2021independent}. Recent works~\citep{ding2022independent,fox2021independent,leonardos2022global} have shown that independent policy gradient can converge to the Nash equilibrium under some conditions, which, to some extent, provides support to our work. More discussion can be found in Appendix~\ref{app:convergence-analysis}.

\textbf{Training Procedure.} To make PAPO capable of generating efficient policies for the games in the set $G$, a trivial idea is to train the networks sequentially over $G$, i.e., train the networks until they converge in one game and then proceed to the next. Unfortunately, such a method is computationally intensive and could suffer catastrophic forgetting, i.e., the networks focus on the learning in the current game. A more reasonable approach is to train the networks by using the data sampled from all the games (a way similar to multi-task learning~\citep{zhang2021survey}). Inspired by this idea, we construct a multi-task-based training process to efficiently train the networks. At the beginning of an episode, a game $G(N)$ is uniformly sampled from $G$. Then, PAPO takes $N$ as input and generates a policy which is used by the agents to interact with the environment. Finally, the experience tuples are used to train the networks. More details can be found in Appendix~\ref{app:training-procedure}.

\section{Experiments\label{sec:experiment}}
We first discuss the environments, baselines, metric used to assess the quality of learning, similarity measure used to investigate the scaling laws, and then present the results and ablation studies.

\textbf{Environments.} We consider the following environments which have been widely used in previous works: Exploration~\citep{lauriere2022scalable}, Taxi Matching~\citep{nguyen2018credit}, and Crowd in Circle~\citep{perrin2020fictitious}. Details of these environments can be found in Appendix~\ref{app:env-details}.

\textbf{Baselines.} (1) PPO: the standard deep RL method. (2) AugPPO: augments the input of PPO with the population size. (3) HyperPPO: without the additional input to the generated policy. (4) PPO-Large: the number of neurons of each layer is increased such that the total number of learnable parameters is similar to PAPO. (5) AugPPO-Large: similar to PPO-Large. The last two baselines are necessary to show that performance gain is not due to the additional number of learnable parameters, but due to the architecture of PAPO. In addition, all the baselines are trained with the same training procedure as PAPO, which ensures a fair comparison between them and PAPO. More details on baselines and hyperparameters are given in Appendix~\ref{app:baselines} and Appendix~\ref{app:hyper-parameters}, respectively.

\textbf{Approximate NashConv.} Computing the exact NashConv is typically intractable in complex games as it requires obtaining the exact best response (BR) policy of the representative agent. Instead, we obtain an approximate BR by training it for a large enough number of episodes ($1e^6$) and thus, obtain an approximate NashConv, which is the common choice in the literature~\citep{lauriere2022scalable,perrin2020fictitious}. More details are given in Appendix~\ref{app:approx-nashconv}. Furthermore, in Appendix~\ref{app:training-curves}, we show the BR training curves to demonstrate that the BR has approximately converged, ensuring that the approximate NashConv is a reasonable metric to assess the quality of learning.

\textbf{Similarity Measure.} Intuitively, two policies are similar means that their output features (representations) of the corresponding layers (not their parameter vectors) are similar, given the same input data. Particularly, the outputs of the final layer are closely related to an agent's decision-making (behavior). Therefore, we use the centered kernel alignment (CKA)~\citep{kornblith2019similarity} to measure the similarity between two policies. More details can be found in Appendix~\ref{app:cka}.

\begin{figure}[t]
    \centering
    \begin{subfigure}{0.92\textwidth}
        \centering
        \includegraphics[width=\textwidth]{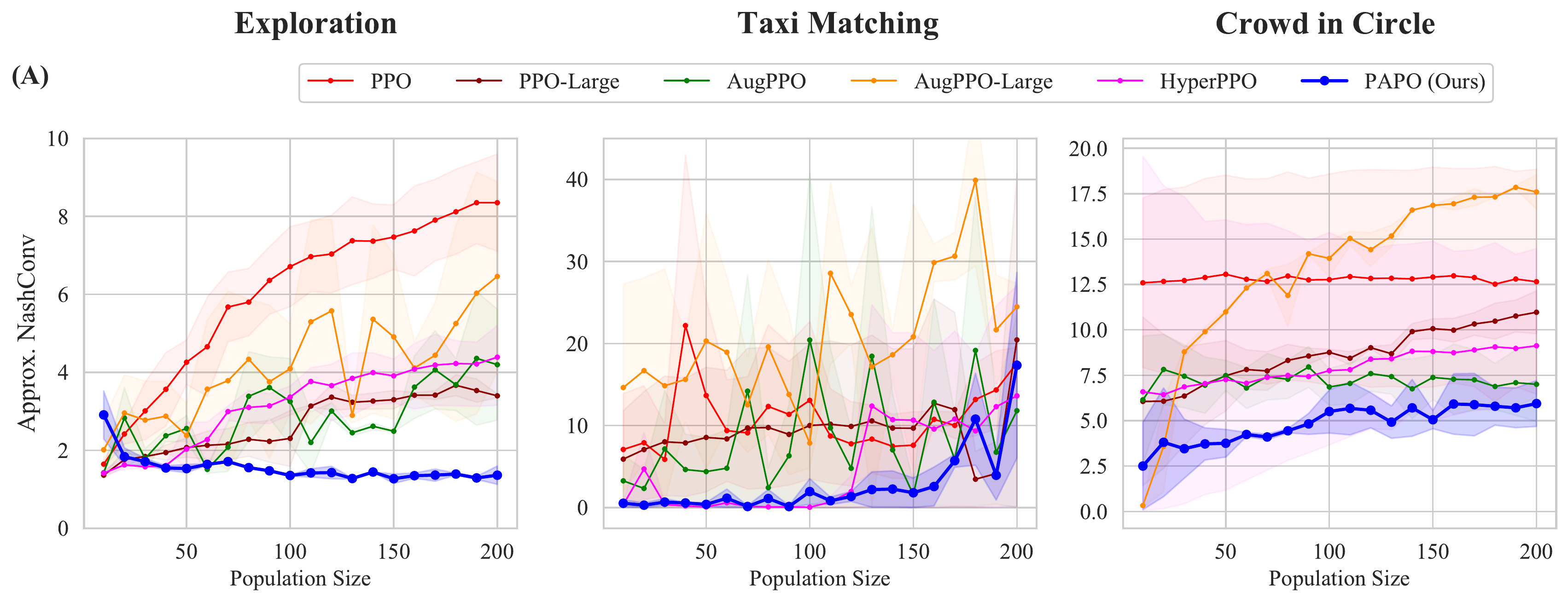}
    \end{subfigure}
    \begin{subfigure}{0.92\textwidth}
        \centering
        \includegraphics[width=\textwidth]{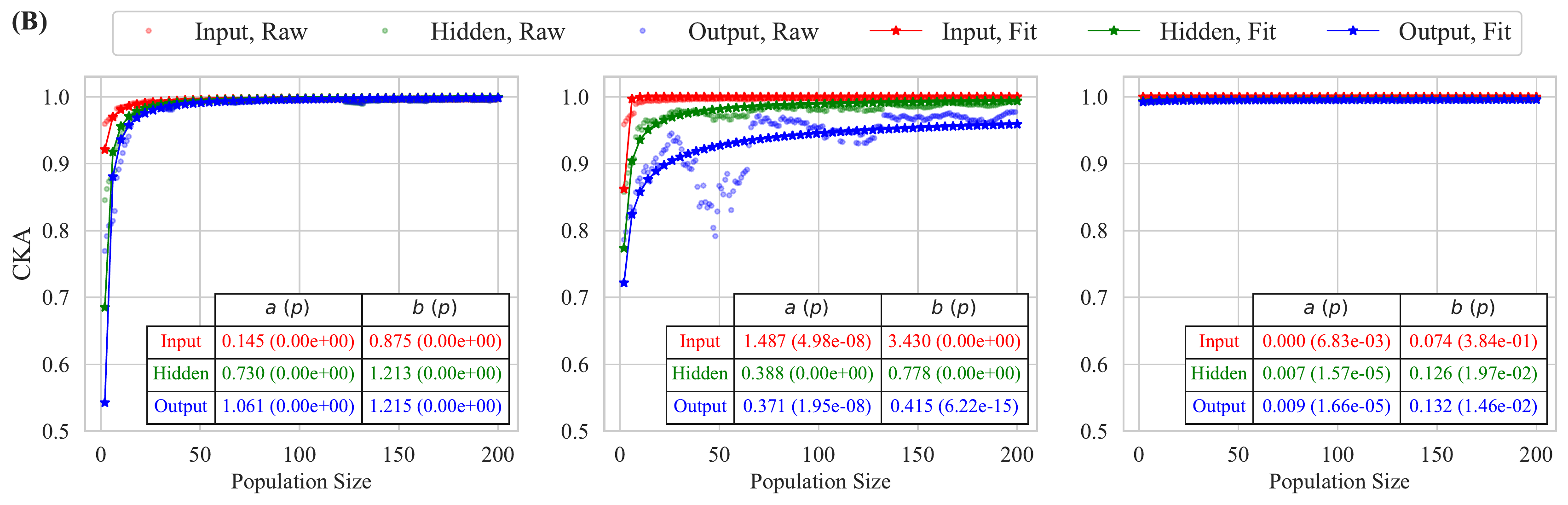}
    \end{subfigure}
    \begin{subfigure}{0.92\textwidth}
        \centering
        \includegraphics[width=\textwidth]{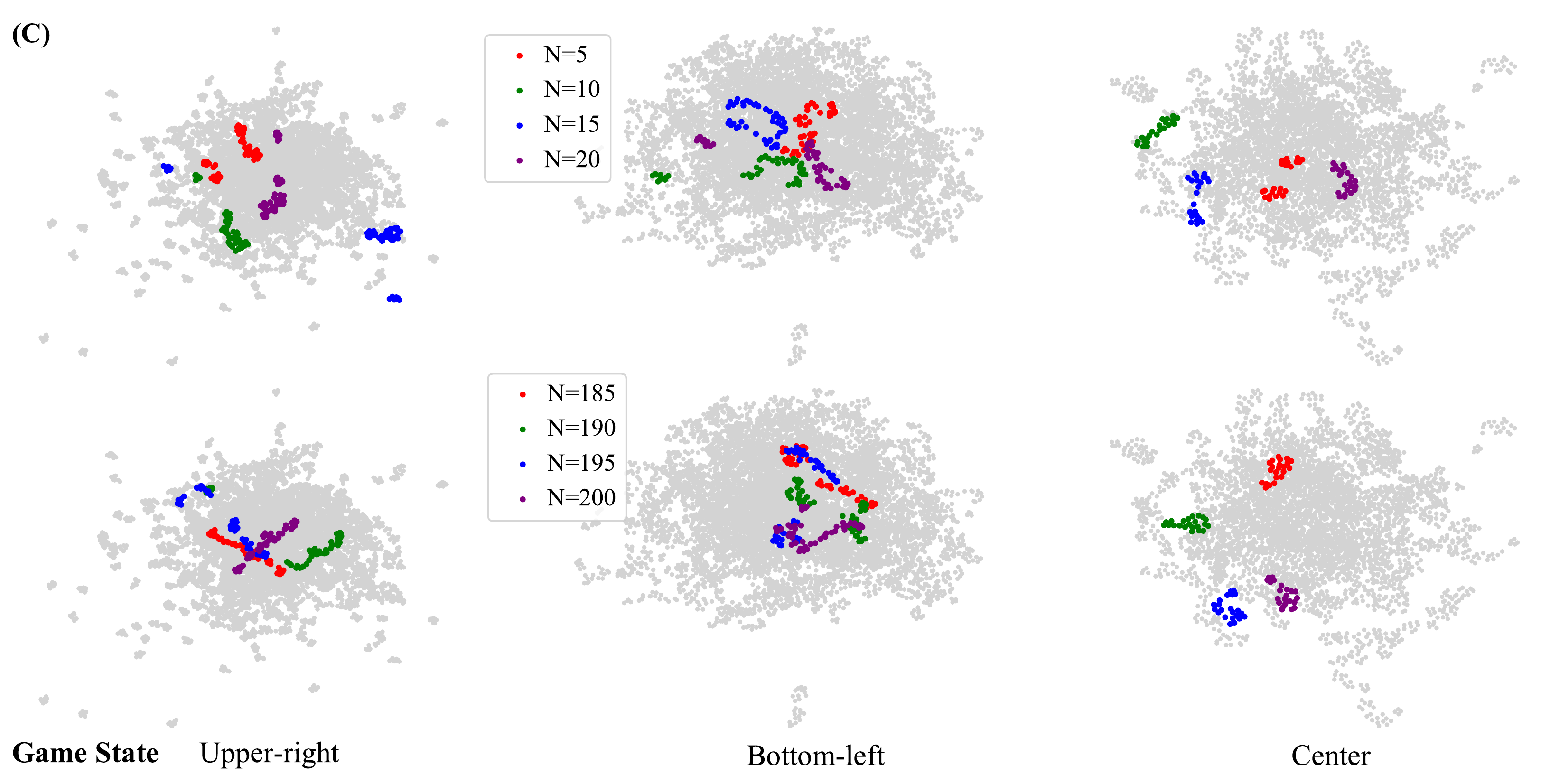}
    \end{subfigure}
\caption{Experimental results. \textbf{(A)} Approximate NashConv versus population size. \textbf{(B)} CKA value versus population size. \textbf{(C)} UMAP embeddings of some human-understandable game states: agent on the upper-right/bottom-left of the map and agent on the center of the circle.}
\label{fig:results-summary}
\end{figure}

\subsection{Results\label{sec:main-results}}
In Fig.~\ref{fig:motivation}, we show the results obtained in a small-scale setting ($2 \leq N \leq 50$) in Taxi Matching environment to illustrate our motivation. There are two most straightforward options to obtain policies for different games: i) directly apply the policy trained in a given game ($N=20$, marked by star) to other games, i.e., PPO-Naive, and ii) train a single policy by using the training procedure in Sec.~\ref{sec:practical-algo} and then apply it to the evaluated games, i.e., PPO. At first glance, it might be natural to expect PPO to work well across different games when it is trained by using the data sampled from different games. However, as shown in the results, though it outperforms PPO-Naive, it still has a large approximate NashConv when evaluating in different games. In striking contrast, PAPO works well across different games. In addition, given the same training budget, PAPO still preserves a similar performance in the game ($N=20$) where PPO-Naive was trained.

In Fig.~\ref{fig:results-summary}, we show the results obtained in a larger-scale setting ($2 \leq N \leq 200$) in different environments. From Panel A, we can draw the following conclusions. (1) PAPO significantly outperforms other naive and strong baselines in terms of approximate NashConv across different games, which demonstrates that PAPO has achieved one of our goals -- efficiently generating efficient policies for a set of finite-agent games. (2) As two special cases of PAPO, AugPPO and HyperPPO can work well across different games. However, they are competitive, which verifies that there is no affirmative answer to the question of which one is better (Sec.~\ref{sec:pa-architecture} and Appendix~\ref{app:exp-discuss}). (3) Given the same training budget, simply increasing the number of learnable parameters of the policy (PPO-Large and AugPPO-Large) or only relying on the hypernetwork to generate the policies (HyperPPO), though could outperform PPO, is still struggling to achieve better performance. This shows the superiority of PAPO as a unified framework which inherits the merits of the two special cases (AugPPO/AugPPO-Large and HyperPPO) and thus, achieves a better performance.

In Panel B, we show how the policies generated by PAPO change with the population size. From the results we can see that the similarity between two policies increases with the increasing population size. In other words, with the increase in population size, the policies in the games with different population sizes tend to be similar to each other, which shows a convergent evolution of the policies in terms of population size. Though it is impractical to experimentally set $N = \infty$, from the results we hypothesize that the similarity measure increases to 1 when $N \to \infty$ (the finite-agent game would become an infinite-agent MFG when $N=\infty$). To quantitatively describe the scaling laws, we show the curves fitting the similarity, which is obtained by employing the tools for curve fitting in Numpy~\citep{harris2020array}. We consider a function with the form of $\rho(N) = 1 - \frac{a}{N^b} \leq 1$, which maps $N$ to the similarity measure. The fitted curves again verify the aforementioned conclusions. Furthermore, we compute the p-value of each parameter of the function ($a$ and $b$), which shows that the curves fit well ($p<0.05$) the original CKA values. Notice that there could be different forms of $\rho(N)$. For example, a more general case is $\rho(N) = 1 - \frac{a}{c N^b + d} \leq 1$. However, by computing the p-values, we found that some parameters ($c$ and $d$) are less significant and hence, can be removed from $\rho(N)$. We give a more detailed analysis in Appendix~\ref{app:cka-analysis}.

In addition, we observed that the variance of the similarity of the input layer is small while that of the output layer is large (especially when $N$ is small). This coincides with the well-known conclusion in deep learning~\citep{alzubaidi2021review,lecun2015deep}: the first layer of a neural network extracts the lowest-level features while the last layer extracts the highest-level features. In our context, the only difference between two games is the number of agents while the underlying environmental setup (e.g., grid size) is the same. Hence, the low-level features extracted by the first layers of the policies are similar. In contrast, as the output layers capture the features of agents' decision makings which are largely impacted by the presence of other agents (more precisely, the number of agents), the extracted high-level features could be very different in different games (especially for small $N$'s).

To gain more intuition on how the generated policies are different from each other, in Panel C, we perform an analysis of the policies' output representations similar to~\citep{jaderberg2019human}. We manually construct some high-level game states such as ``agent on the upper-right/bottom-left of the map'' and ``agent on the center of the circle'' and obtain the output representations of the policy of each game $G(N)$ by using UMAP~\citep{mcinnes2018umap}. We colored the representations of some policies: $N \in \{5, 10, 15, 20\}$ and $N \in \{185, 190, 195, 200\}$ to represent small-scale and large-scale settings, respectively. From the results, we can see that when $N$ is large, the policies of different games tend to encode the same game states in similar ways (i.e., the UMAP embeddings of the same game state of different policies tend to be centered), which is aligned with the findings revealed by the CKA values. We give more analysis in Appendix~\ref{app:game-state-representation}.

\subsection{Ablations\label{sec:ablation}}

To gain a deeper understanding of the proposed approach and further support the results obtained in the previous section, we provide more discussion in this section and Appendix~\ref{app:more-exp-results}. 

\textbf{Effect of Population-size Encoding}. In Fig.~\ref{fig:papo-re}, we show the training curves of PAPO when using RE (i.e., directly using $N$ as the input). It is observed that using RE can result in training collapse. In Taxi Matching environment, though PAPO with RE still gets positive rewards, it is lower than PAPO with BE. The results demonstrate the importance of the population-size encoding for training the neural networks of PAPO. More detailed analysis can be found in Appedix~\ref{app:effect-of-encoding}.

\begin{figure}[ht]
    \centering
    \includegraphics[width=0.9\textwidth]{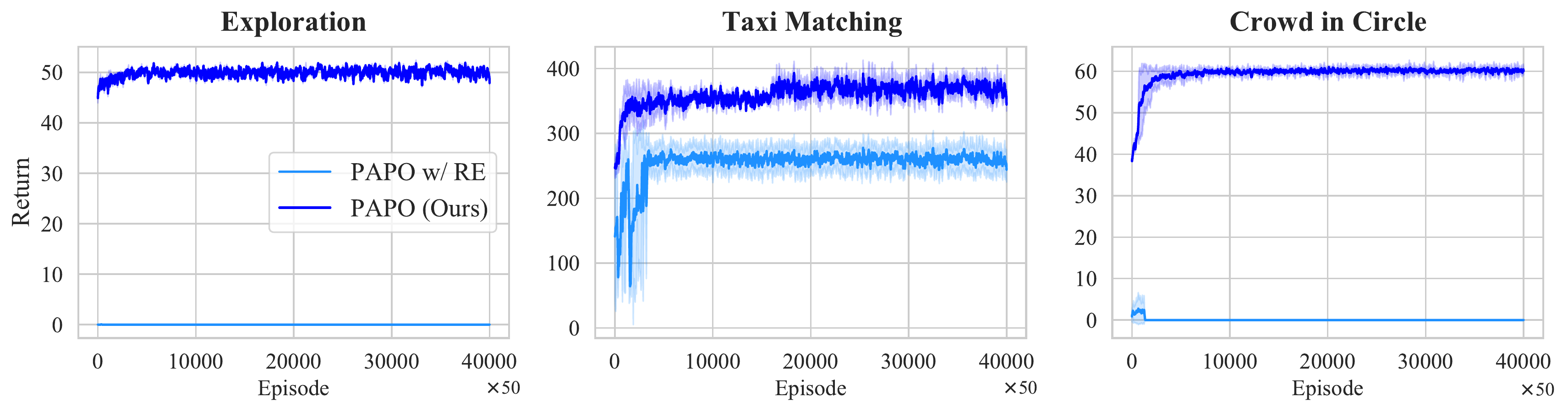}
    \caption{Collapsed training of PAPO when using RE.}
    \label{fig:papo-re}
\end{figure}

\textbf{Generalization to Unseen Games}. In Fig.~\ref{fig:perf-unseen-games}, we show the performance of applying PAPO and other baselines to unseen games. We found that, though the population-size aware methods (PAPO, HyperPPO, AugPPO, and AugPPO-Large) could perform better than the population-size unaware methods (PPO and PPO-Large) in some of the unseen games, their performance fluctuates and could be much worse. Therefore, an important future direction is to improve the generalizability of PAPO (as well as other population-size-aware methods), where more advanced techniques such as adversarial training~\citep{ganin2016domain} are needed. 
\begin{figure}[ht]
    \centering
    \includegraphics[width=0.9\textwidth]{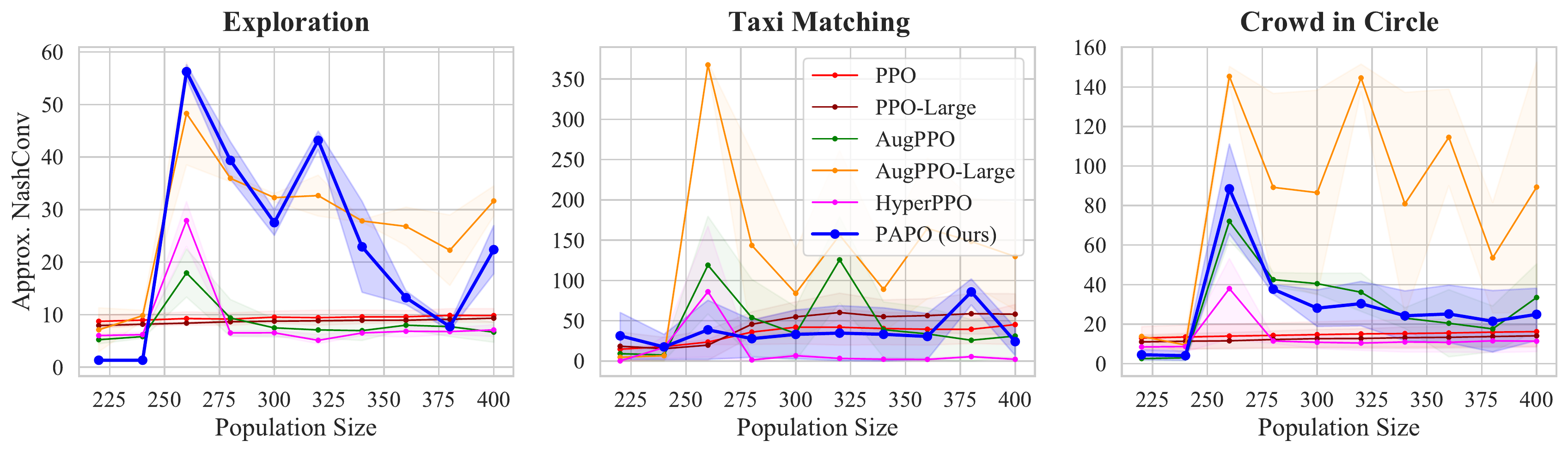}
    \caption{Performance of PAPO and baselines on unseen games.}
    \label{fig:perf-unseen-games}
\end{figure}

\section{Conclusions and Future Works\label{sec:conclusion}}

In this work, we attempt to bridge the two research fields of finite-agent and infinite-agent games from an agent-centric perspective with three main contributions: i) a novel policy optimization approach, PAPO, to efficiently generate efficient policies for a set of games with different population sizes, ii) a multi-task-based training procedure to efficiently train the neural networks of PAPO, and iii) extensive experiments to demonstrate the significant superiority of our approach over several naive and strong baselines, and the analysis of the scaling laws of the policies to further deepen our understanding of the two research fields of finite-agent and infinite-agent games.

There are several future directions. (1) Beyond the single type of agents, in real-world scenarios, the agents can be divided into multiple types~\citep{ganapathi2020multi,yang2020bayesian}, e.g., in Taxi Matching environment, different vehicles have different capacities~\citep{alonso2017demand}. Investigating the scaling laws of each type requires new methods and training regimes. (2) Beyond the population size~\citep{gatchel2021variable}, in future works, other game elements such as state and action spaces or even the rules of the games can be different between games. PAPO demonstrates the possibility of learning a universal policy for completely different games. (3) Beyond the target set of games, generalization to unseen games requires more advanced techniques such as adversarial training~\citep{ganin2016domain}. (4) Our approach may have implications for Meta-RL~\citep{fakoor2019meta,finn2017model,rakelly2019efficient,sohn2019meta} as well.

\section*{Acknowledgments}

This research is supported by the National Research Foundation, Singapore under its Industry Alignment Fund -- Pre-positioning (IAF-PP) Funding Initiative. Any opinions, findings and conclusions or recommendations expressed in this material are those of the author(s) and do not reflect the views of National Research Foundation, Singapore.

\bibliography{iclr2023_conference}
\bibliographystyle{iclr2023_conference}

\clearpage
\appendix
\section{Learning in Markov Games}
In this section, we provide more discussion on learning in Markov games. First, we present some discussion and analysis on independent learning for finding Nash equilibria of Markov games. Then, we provide more discussion on the connection between MGs and MFGs presented in this work to further facilitate the understanding of our proposal.

\subsection{Convergence Analysis of Independent Policy Gradient\label{app:convergence-analysis}}
In a given game, as all the agents learn their policies independently, the policy learning in this work falls in the category of using independent RL to find Nash equilibrium~\citep{ozdaglar2021independent}. Recent works~\citep{ding2022independent,fox2021independent,leonardos2022global} have shown that independent policy gradient can always converge to the Nash equilibrium under some conditions. In this section, we provide more analysis on the convergence of independent policy gradient in MGs, which, to some extent, provides support to our approach.

\subsubsection{Markov Potential Games}
In the standard definition of the Markov game~\citep{littman1994markov,shapley1953stochastic,ozdaglar2021independent,ding2022independent,fox2021independent,leonardos2022global}, agents are typically assumed to have access to the global state. In the context of our work, the global state is the joint state of all agents $\bm{s}_t = (s_t^1, \cdots, s_t^N) \in \mathcal{S}^N$. In this sense, a Markov game $G(N)$ is called a Markov potential game (MPG) if there exists a (global) state-dependent potential function $\Phi_{\bm{s}}: \Pi \to \mathbb{R}$ such that
\begin{equation}
\label{eq:mpg-condition}
    \Phi_{\bm{s}}(\pi^i, \bm{\pi}^{-i}) - \Phi_{\bm{s}}(\hat{\pi}^i, \bm{\pi}^{-i}) = V^i(\bm{s}, \pi^i, \bm{\pi}^{-i}) - V^i(\bm{s}, \hat{\pi}^i, \bm{\pi}^{-i}),
\end{equation}
for all agents $i \in \mathcal{N}$, states $\bm{s} \in \mathcal{S}^N$ and policies $\pi^i, \hat{\pi}^i \in \Pi^i$, $\bm{\pi}^{-i} \in \Pi^{-i}$. $\Pi^i$ is the policy space of agent $i$ and $\Pi = \times_{i \in \mathcal{N}} \Pi^i$ ($\Pi^{-i} = \times_{j \in \mathcal{N}, j \ne i} \Pi^j$) is the space of joint policy of all agents (except $i$).

As one of the most important and widely studied classes of games, MPGs have gained much research attention recently due to their power in modeling mixed cooperative/competitive environments and the convergence guarantee of independent policy gradient in these games~\citep{ding2022independent,fox2021independent,leonardos2022global}. 

\subsubsection{Independent Policy Gradient}
Assume that all agents update their policies according to the gradient ascent (GA) on their policies independently, i.e., only local information such as each agent's own rewards, actions, and observations (the agent's local state $s \in \mathcal{S}$) is used during the learning process~\citep{leonardos2022global}. The GA is given by
\begin{equation}
\label{eq:GA}
    \pi^{i, k+1} := \pi^{i, k} + \delta \nabla_{\pi^i} V^i(\pi^{i, k}), \quad \forall i \in \mathcal{N},
\end{equation}
where $k$ indicates the $k$-th iteration of the policy. Note that the policy $\pi^{i, k+1}$ after $k$-th update is always bounded on its policy space $\pi^{i, k+1} \in \Pi^i$, i.e., $\pi^{i, k+1}(\cdot \vert s) \in \Delta(\mathcal{A})$ for all $s \in \mathcal{S}$. If not so, a projection operation can be applied to ensure this condition. We omit such an operation for simplicity. Furthermore, we assume that all agents $i \in \mathcal{N}$ use the direct policy parameterization with $\alpha$-greedy exploration as follows:
\begin{equation}
\label{eq:direct-param-greedy}
    \pi^i(a \vert s) = (1 - \alpha) x_{i, s, a} + \frac{\alpha}{\vert \mathcal{A} \vert},
\end{equation}
where $x_{i,s,a} \ge 0$ for all states $s \in \mathcal{S}$, actions $a \in \mathcal{A}$ and $\sum_{a \in \mathcal{A}} \pi^i(a \vert s) = 1$ for all $s \in \mathcal{S}$. Essentially, $\pi^i(s) = (\pi^i(a \vert s))_{a \in \mathcal{A}}$ is a mixed strategy in state $s$. $\alpha$ is the exploration parameter.

In practice, the exact gradient $\nabla_{\pi^i} V^i(\pi^{i, k})$ is typically unavailable, agents use stochastic gradient ascent (SGA) to update their policies. Specifically, the exact gradient $\nabla_{\pi^i} V^i(\pi^{i, k})$ is replaced by an estimator, denoted as $\hat{\nabla}_{\pi^i}^{k}$, which is typically derived from a batch of observations collected through interactions with the game environment by employing the policy $\pi^{i,k}$ at $k$-th iteration. The commonly used (REINFORCE) gradient estimator is as follows:
\begin{equation}
    \hat{\nabla}_{\pi^i}^{k} = R^i \sum_{t=0}^T \nabla \log\pi^{i,k}(a_t \vert s_t),
\end{equation}
where $R^i = \sum_{t=0}^T r_t$ is the sum of rewards of agent $i$ along the trajectory collected by using the policy $\pi^{i,k}$ at $k$-th iteration. Now the SGA is given by
\begin{equation}
\label{eq:SGA}
    \pi^{i, k+1} := \pi^{i, k} + \delta \hat{\nabla}_{\pi^i}^{k}, \quad \forall i \in \mathcal{N}.
\end{equation}
 
\subsubsection{Convergence Analysis}

In this section, we briefly discuss the convergence of independent policy gradient in MGs, following the results obtained in~\citep{leonardos2022global}.

\begin{Proposition}
\label{prop:cond-for-mpgs}
If the reward function is a global signal, i.e., $r(s^i, a^i, z_{\bm{s}}^N) = r(z_{\bm{s}}^N)$, $\forall i \in \mathcal{N}$, $\forall s^i \in \mathcal{S}$, and $\forall a^i \in \mathcal{A}$, the finite-agent MG $G(N)$ is an MPG.
\end{Proposition}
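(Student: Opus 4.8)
The plan is to exhibit an explicit potential function and verify the defining identity \ref{eq:mpg-condition} directly. The key structural observation is that when the reward is a global signal, the stage reward $r(z_{\bm s_t}^N)$ depends only on the empirical state distribution and is therefore common to all agents, while the evolution of the joint state $\bm s_t$ depends only on the joint policy $\bm\pi$ (through $s_{t+1}^j\sim p(\cdot\mid s_t^j,a_t^j,z_{\bm s_t}^N)$ with $a_t^j\sim\pi^j$) and not on which agent is singled out. Hence the joint-state value function $V^i(\bm s,\bm\pi)=\mathbb E_{\bm\pi}\!\left[\sum_{t=0}^T r(z_{\bm s_t}^N)\,\middle|\,\bm s_0=\bm s\right]$ is independent of the index $i$; I will write this common quantity as $V(\bm s,\bm\pi)$.

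First I would make this precise: fixing the joint initial state $\bm s$, I unroll the definition of $V^i$ in the MPG setting (where every agent has access to the global state) and substitute $r(s^i,a^i,z_{\bm s}^N)=r(z_{\bm s}^N)$, which immediately yields $V^i(\bm s,\bm\pi)=V^j(\bm s,\bm\pi)=:V(\bm s,\bm\pi)$ for all $i,j\in\mathcal N$ and every joint policy $\bm\pi\in\Pi$. Next I would set $\Phi_{\bm s}(\bm\pi):=V(\bm s,\bm\pi)$ for every $\bm s\in\mathcal S^N$ (this is well defined precisely because $V^i$ does not depend on $i$) and check \ref{eq:mpg-condition}: for any $i\in\mathcal N$, $\bm\pi^{-i}\in\Pi^{-i}$, and $\pi^i,\hat\pi^i\in\Pi^i$,
\[
\Phi_{\bm s}(\pi^i,\bm\pi^{-i})-\Phi_{\bm s}(\hat\pi^i,\bm\pi^{-i})
= V(\bm s,\pi^i,\bm\pi^{-i})-V(\bm s,\hat\pi^i,\bm\pi^{-i})
= V^i(\bm s,\pi^i,\bm\pi^{-i})-V^i(\bm s,\hat\pi^i,\bm\pi^{-i}),
\]
where the last equality again uses $V^i=V$. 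This is exactly the MPG condition, so $G(N)$ is an MPG with potential $\Phi_{\bm s}$. In fact the argument shows a bit more: $G(N)$ is an identical-interest (common-payoff) game, the simplest subclass of MPGs.

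There is no substantive obstacle here; the only point that needs a little care is reconciling the scalar-initial-state value function $V^i(s^i,\pi^i,\bm\pi^{-i})$ of the preliminaries with the joint-state value function $V^i(\bm s,\pi^i,\bm\pi^{-i})$ used in the appendix's definition of an MPG. One must phrase the argument at the level of the joint state $\bm s\in\mathcal S^N$ so that the agent-independence of $V^i$ is manifest, and one should note that even though each $\pi^j$ conditions only on the local state $s^j$, the joint transition kernel together with the global reward still renders the value fully symmetric across agents, which is all that the potential-function identity requires.
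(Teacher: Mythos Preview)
Your argument is correct. You directly exhibit the potential $\Phi_{\bm s}(\bm\pi)=V(\bm s,\bm\pi)$, observe that a common global reward makes all agents' joint-state value functions coincide, and verify the MPG identity~\eqref{eq:mpg-condition} in one line. This is precisely the ``identical-interest $\Rightarrow$ potential'' observation, and nothing more is needed.

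The paper takes a slightly different route: rather than verifying~\eqref{eq:mpg-condition} directly, it invokes Proposition~3.2 of \citep{leonardos2022global}, which gives a stage-wise sufficient condition for an MG to be an MPG. That condition asks that (i) every state $\bm s$ host a one-shot potential game with stage potential $\phi_{\bm s}(\bm a)$, and (ii) the residual term $u_{\bm s}^i(\bm a^{-i})$ in the decomposition $r^i=\phi_{\bm s}+u_{\bm s}^i$ satisfy a gradient condition~\eqref{eq:condition-for-dummy-term}. Under a global reward one takes $\phi_{\bm s}(\bm a)=r(z_{\bm s}^N)$ and $u_{\bm s}^i\equiv 0$, so both parts hold trivially. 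Your direct construction is more elementary and self-contained; the paper's detour through \citep{leonardos2022global} has the advantage of situating the result within a broader sufficient condition that also covers non-identical-interest cases, but for this proposition the two arguments are equivalent in strength and yours is shorter.
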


\begin{proof}
The result can be derived by showing that the conditions in Proposition 3.2 in~\citep{leonardos2022global} are satisfied. Specifically, as the agents share a global reward $r(z_{\bm{s}}^N)$, $G(N)$ is a potential game at each (global) state $\bm{s} \in \mathcal{S}^N$, i.e., the potential function is the global reward function $\phi_{\bm{s}}(\bm{a}) = r(z_{\bm{s}}^N)$. Therefore, each agent $i$'s instantaneous reward is decomposed as $r(s^i, a^i, z_{\bm{s}}^N) = \phi_{\bm{s}}(\bm{a}) + u_{\bm{s}}^i(\bm{a}^{-i})$, where the dummy $u_{\bm{s}}^i(\bm{a}^{-i}) \equiv 0$ trivially satisfies the condition:
\begin{equation}
\label{eq:condition-for-dummy-term}
    \nabla_{\pi^i} \mathbb{E}_{\tau \sim \bm{\pi}} \Bigg[ \sum_{t=0}^T u_{\bm{s}_t}^i(\bm{a}_t^{-i}) \Big\vert \bm{s}_0 = \bm{s}\Bigg] = (c_{\bm{s}} \bm{1})_{\bm{s} \in \mathcal{S}^N},
\end{equation}
where $\bm{a}$ and $\bm{a}^{-i}$ are respectively joint actions of all agents and of all agents other than $i$, $c_{\bm{s}} \in \mathbb{R}$, $\bm{1} \in \mathbb{R}^{\vert \mathcal{A} \vert}$, $\tau$ is the trajectory when all agents follow the policy $\bm{\pi}$.
\end{proof}

Note that here we only give a simple discussion on the connection between our work and the recent advances in the convergence of independent policy gradient, which is not the focus of this work. Indeed, a global reward can ensure that $G(N)$ is an MPG, but we do not impose such a condition in the definition of the game model in Sec.~\ref{sec:preliminaries}. In different benchmark environments, an agent can receive a global reward, a local reward~\citep{lauriere2022scalable}, or both~\citep{nguyen2018credit}.

\begin{Proposition}
(Convergence of Independent Policy Gradient, Theorem 1.2 in~\citep{leonardos2022global}) Consider the MG $G(N)$ satisfying the condition in Proposition~\ref{prop:cond-for-mpgs}. If each agent runs SGA~\ref{eq:SGA} using direct policy parameterization (Eq.~\ref{eq:direct-param-greedy}) on their policies and the updates are simultaneous, then the learning converges to an approximate Nash policy.
\end{Proposition}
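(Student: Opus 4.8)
The plan is to prove this as a corollary of Theorem~1.2 of \citep{leonardos2022global}: I would first certify that $G(N)$, together with the prescribed learning rule, falls inside the hypotheses of that theorem, and then quote it. Proposition~\ref{prop:cond-for-mpgs} already does the essential structural work — under the global-reward condition, $G(N)$ is a Markov potential game (MPG) with potential $\Phi_{\bm{s}}$ obeying (\ref{eq:mpg-condition}), the potential being the expected cumulative value of $\phi_{\bm{s}}(\bm{a})=r(z^N_{\bm{s}})$ with all dummy terms vanishing. So the remaining task is purely one of checking regularity and then invoking the cited theorem.

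\textbf{Verifying the hypotheses.} I would check three things. First, \emph{smoothness}: because $\mathcal{S}$ and $\mathcal{A}$ are finite, the horizon $T$ is finite, and $r$ is bounded, each $V^i$ (hence $\Phi$) is a $\beta$-smooth function of the joint direct-parameterization vector $(x_{i,s,a})$ for an explicit $\beta=\beta(|\mathcal{S}|,|\mathcal{A}|,T,\|r\|_\infty)$; this is the finite-horizon version of the smoothness lemma in \citep{leonardos2022global}, and is in fact easier than their discounted case since the value is a polynomial in the policy. Second, \emph{exploration}: the $\alpha$-greedy parameterization (Eq.~\ref{eq:direct-param-greedy}) forces $\pi^i(a\mid s)\ge \alpha/|\mathcal{A}|$, which keeps the iterates in the relative interior of the product of simplices (so the omitted projection never binds) and makes all induced occupancy measures bounded below on the reachable set, giving a finite distribution-mismatch coefficient $\kappa=\kappa(\alpha)$ — exactly what the gradient-domination argument of \citep{leonardos2022global} consumes. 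Third, \emph{stochastic gradients}: the REINFORCE estimator $\hat{\nabla}^k_{\pi^i}=R^i\sum_{t=0}^T\nabla\log\pi^{i,k}(a_t\mid s_t)$ is unbiased for $\nabla_{\pi^i}V^i$, and since $|R^i|\le (T{+}1)\|r\|_\infty$ and the score is bounded in norm by $|\mathcal{A}|/\alpha$, its variance is at most some finite $\sigma^2=\sigma^2(\alpha)$; the simultaneity of the updates (Eq.~\ref{eq:SGA}) is precisely the ``simultaneous SGA'' regime of the theorem.

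\textbf{Invoking the theorem and closing.} With these in hand, Theorem~1.2 of \citep{leonardos2022global} applies: running simultaneous SGA (Eq.~\ref{eq:SGA}) with a suitably small constant (or slowly decaying) step size $\delta$, the joint policy iterates reach, in expectation and after a number of iterations polynomial in $1/\epsilon$ and in the constants $(\beta,\sigma^2,\kappa)$, an $\epsilon$-approximate first-order stationary point of the potential $\Phi$. The last step is the potential-game bridge: in an MPG, because by (\ref{eq:mpg-condition}) the partial gradient of $\Phi$ in agent $i$'s block coincides with that of $V^i$, a small projected gradient of $\Phi$ certifies that no agent can gain by a unilateral deviation, and the quantitative gradient-domination inequality of \citep{leonardos2022global} turns ``$\epsilon$-stationary for $\Phi$'' into ``$\mathrm{poly}(\epsilon)$-Nash''. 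Combining these yields convergence to an approximate Nash policy, as claimed.

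\textbf{Main obstacle.} The delicate point is reconciling the \emph{local-state} policy class used here — each $\pi^i$ is a map $\mathcal{S}\to\Delta(\mathcal{A})$ of the agent's own state, not a map of the global state $\bm{s}\in\mathcal{S}^N$ — with the global-state MPG framework of \citep{leonardos2022global}. One can view a local-state policy as a constrained global-state policy that ignores the other coordinates, so smoothness and the potential identity (\ref{eq:mpg-condition}) survive the restriction; but the stationarity-to-Nash equivalence and the gradient-domination step must then be re-examined over this restricted (still product-structured) class, and one must verify that the distribution-mismatch coefficient stays finite and that the final rate degrades only gracefully as $\alpha\to 0$ (since both $\sigma^2(\alpha)$ and $\kappa(\alpha)$ blow up as $\alpha$ shrinks, the exploration level has to be tuned against $\epsilon$). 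Everything else is routine constant-tracking.
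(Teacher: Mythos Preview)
Your proposal is correct and follows the same route the paper takes, but the paper itself provides no proof at all: the proposition is stated purely as a citation of Theorem~1.2 in \citep{leonardos2022global}, with Proposition~\ref{prop:cond-for-mpgs} supplying the MPG structure, and the authors immediately move on to remarks about neural-network parameterizations. So your hypothesis-checking (smoothness, exploration, bounded-variance REINFORCE estimator) and the stationarity-to-Nash bridge are all elaboration beyond what the paper does; you are filling in what the paper leaves implicit in the bare citation.

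The ``main obstacle'' you flag --- that the policies here condition only on the agent's local state $s^i\in\mathcal{S}$ rather than the global state $\bm{s}\in\mathcal{S}^N$ assumed in \citep{leonardos2022global} --- is a genuine technical point the paper does not address either. Your suggested fix (view local-state policies as a restricted product class and re-run gradient domination there) is the right instinct, but be aware that the paper simply does not engage with this subtlety; it treats the proposition as a black-box import.
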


Note that the above convergence is established under the direct policy parameterization of $\pi_{\bm{\theta}}$. In practice, such a choice is typically less expressive than using a neural network to represent the policy. However, establishing the convergence guarantee for neural network-type parameterization is extremely hard, if not impossible, because neural networks are typically highly non-linear and non-convex/concave. Nonetheless, empirical verification can be conducted by computing the (approximate) NashConv of a trained policy $\pi_{\bm{\theta}}$, i.e., if the (approximate) NashConv is 0, then the policy $\pi_{\bm{\theta}}$ is a Nash policy, regardless of the type of parameterization.

In our experiments, we use neural networks to represent the policies of agents. In this context, as the (approximate) NashConv measures the distance of a policy to the Nash policy, we train/generate a policy that has a lower (approximate) NashConv, which shows the potential (though not exact) convergence of our approach.

\subsection{Connection Between MG and MFG\label{app:connection-MG-MFG}}
To facilitate the understanding of our proposal, we give some remarks on the connection between the MG and MFG described in Sec.~\ref{sec:preliminaries}.

We attempt to bridge the two research fields (finite-agent MG and infinite-agent MFG) from a finite-to-infinite perspective. That is, we try to envision how will the policy of an MFG behave by investigating a series of finite-agent MGs. In this sense, the finite-agent MGs we considered should share a similar structure with the MFG, i.e., all agents are homogeneous. Though such game-level connection between MG and MFG is somewhat implicit, MFG provides us the guidance to generate the ``correct'' finite-agent MGs, i.e., the games that are different only in the number of agents while other elements such as state and action spaces and reward functions are kept identical. Notice that directly generating a game with an infinite number of agents is impractical as it requires infinite computing resources for simulating the behaviors of an infinite number of agents. 

On the policy-level connection, we note that it is not directly comparable between our proposed PAPO and the methods for MFGs. In view of the finite-to-infinite perspective, to implement a fair comparison, naturally we first need to generate a game with an infinite number of agents and then train the networks of PAPO by using the data sampled from this game. However, we can neither generate such a game as it requires infinite computing resources for simulating the behaviors of an infinite number of agents nor train the networks as PAPO will need to take $N=\infty$ as input. In this sense, our experiments do not (or more precisely, cannot) implement such a comparison but provide insights into the relationship between the policies generated by PAPO and the policy of the MFG. 

On the other hand, much also can be done from an infinite-to-finite perspective that is opposite to ours. This is grounded on the well-known fact that a policy that gives a mean field equilibrium also gives an $\epsilon(N)$-Nash equilibrium for the corresponding game with $N$ agents, where $\epsilon(N)$ goes to 0 as $N$ goes to infinity~\citep{saldi2018markov}. In this sense, one can first learn a policy in the MFG (under the assumption that the algorithm has access to a simulator which takes the current state, action, and mean-field as inputs and outputs the reward, next state, and next mean-field~\citep{guo2019learning}) and then apply it to the finite-agent games with different $N$'s. Essentially, this is analog to the PPO-Naive mentioned in the Introduction. Differently, here the policy is trained in the MFG while in PPO-Naive it is trained in a finite-agent MG. As a consequence, the (approximate) NashConv could be arbitrarily large when $N$ decreases from large to small (ideally, from $N=\infty$ to $N=2$). In this sense, it is worth interest to propose novel methods to mitigate the loss of applying the policy learned in the MFG to finite-agent MGs, which we leave for future works.

\subsection{More Related Works\label{app:more-related-works}}

\textbf{Learning in Markov Games (MGs)}. Among finite-agent games, Markov game~\citep{littman1994markov,shapley1953stochastic} is a widely used framework for characterizing games with sequential decision-making, e.g., in multi-agent reinforcement learning. There is a long line of work in developing efficient algorithms for finding Nash equilibria of MGs under various assumptions such as full environmental knowledge~\citep{hansen2013strategy,hu2003nash,littman2001friend,wei2020linear}, access to simulators~\citep{jia2019feature,sidford2020solving,wei2017online,wei2021last}, and special structures like zero-sum games~\citep{bai2020provable,bai2020near,chen2021almost,huang2021towards,jin2021power,xie2020learning,zhang2020model} and potential games~\citep{ding2022independent,fox2021independent,leonardos2022global}, or built on techniques for learning single-agent Markov decision process~\citep{bai2021sample,liu2021sharp}. Our work adds to the vast body of existing works on learning in MGs. Instead of concentrating on the policy learning in a given MG, we aim to investigate how the optimal policies of agents evolve with the population size, i.e., scaling laws~\citep{kaplan2020scaling,kello2010scaling,lobacheva2020power}, and propose novel methods to efficiently generate policies that work well across games with different population sizes, given that the agents are homogeneous which is a common phenomenon in many real-world domains such as crowd modeling~\citep{yang2020review}, Ad auction~\citep{gummadi2012repeated}, fleet management~\citep{lin2018efficient}, and sharing economy~\citep{hamari2016sharing}.

\textbf{Learning in Mean-Field Games (MFGs)}. In contrast to finite-agent MGs, MFGs consider the case with an infinite number of agents. Since introduced in~\citep{huang2006large} and~\citep{lasry2007mean}, MFG has gained intensive research attention due to its power in modeling games involving a large population of agents~\citep{achdou2020mean,gomes2014mean,lauriere2021numerical,ruthotto2020machine}. Recently, the capability of MFG is further revealed by benefiting from RL~\citep{fu2019actor,guo2019learning,perolat2021scaling,perrin2020fictitious} and deep RL~\citep{lauriere2022scalable,perrin2021generalization,subramanian2019reinforcement,yang2018learning}. Despite the progress, the evolution of finite-agent games to infinite-agent MFGs is poorly understood and unexplored when the policies of agents are represented by deep neural networks. In this sense, our work makes the first attempt to bridge the two research fields by establishing a novel tool which we call PAPO.

\subsection{More Discussion on the term ``Scaling Laws''\label{app:discussion-on-scaling-laws}}

The terminology ``Scaling Law'' has been widely used in different areas including biology, physics, social science, and computer science. It typically describes the functional relationship between two quantities. In this sense, we note that the two quantities are typically problem-dependent, e.g., the performance of a model and the model size~\citep{kaplan2020scaling}, the fluctuations in the number of messages sent by members in a communication network and their level of activity~\citep{rybski2009scaling}, the probability that a vertex in a social network interacts with other vertices and the $k$ other vertices~\citep{barabasi1999emergence}, to name a few.

In this work, the two quantities are: (1) the behavior of the policy network, and (2) the number of agents. In fact, this is conceptually similar to the scaling laws in social networks which investigate how some quantity (e.g., the property of a social network such as the connectivity) changes with the number of vertices (typically, a vertex stands for an individual, i.e., an agent)~\citep{barabasi1999emergence}. Our work follows a similar idea but focuses on investigating how the behavior of the policy network change with the number of agents. Therefore, the term "Scaling Law" is suited to this work as, in a more general sense, it can be used to describe the relationship between any two quantities which are determined by the specific problem at hand, not only the size of the model or training set or the amount of compute used for training~\citep{kaplan2020scaling}. Furthermore, in contrast to the areas such as natural language processing (NLP), computer vision (CV), and single-agent RL which typically consider a single model, it is natural to consider the scaling laws of the policy with the number of agents in multi-agent systems (MAS).

\subsection{More Discussion on Similarity Measure\label{app:discussion-on-similarity-measure}}

In this section, we give more discussion on the similarity measure employed in this work. First, we answer an important question as follows: 

\textit{What is the most appropriate measure for studying the scaling laws in the context of this work?}

In this work, we aim to investigate how the Nash policy $\pi_N^*$ changes with the population size $N$. However, it is non-trivial to determine an appropriate measure to capture the evolution of $\pi_N^*$ with $N$. There are two intuitive choices:

(1) The performance of $\pi_N^*$, which is similar to~\citep{kaplan2020scaling}. However, we note that taking the performance as the measure is suitable only when the underlying task is identical, as in~\citep{kaplan2020scaling}. In our work, the games with different $N$'s are essentially different tasks, which means that the performance of (approximate NashConv) is not an appropriate measure. This is also the difference between our work and some machine learning (ML) works such as~\citep{kaplan2020scaling}.

(2) The difference between $\pi_N^*$ and a fixed reference policy $\hat{\pi}$. Such a measure may unable to show how $\pi_N^*$ changes with $N$, because all $\pi_N^*$'s could have the same difference. For example, suppose that $\pi_N^*$ and $\hat{\pi}$ have two parameters (2 dimensions). Then, when all $\pi_N^*$'s form a circle and the centroid is the fixed reference policy $\hat{\pi}$, all $\pi_N^*$'s have the same distance to $\hat{\pi}$, but they are different themselves.

Thus, considering an ``absolute'' quantity (performance of $\pi_N^*$ or difference between $\pi_N^*$ and a fixed reference policy $\hat{\pi}$) could be struggling in studying the scaling laws of the Nash policies $\pi_N^*$. Instead, we consider the ``relative'' change between the Nash policies, i.e., the difference $\rho(N)$ between two Nash policies. Intuitively, such a measure could have more implications and does not cause inconsistency with our objective (i.e., how $\pi_N^*$ itself changes with $N$) as one could recover an ``absolute'' quantity from the ``relative'' changes. For example, let the reference policy be $\hat{\pi}=\pi_{N+2}^*$, $\rho(N)$ be the difference between $\pi_N^*$ and $\pi_{N+1}^*$, $\rho(N+1)$ be the difference between $\pi_{N+1}^*$ and $\hat{\pi}$. Then, it could be possible to derive the difference between $\pi_N^*$ and $\hat{\pi}$ by combining $\rho(N)$ and $\rho(N+1)$.

In addition, when considering the existence of multiple equilibria, rigorously defining the similarity measure $\rho(N)$ could be more involved and outside the scope of this work, as it could be closely related to closed-form solutions which are typically hard to obtain in complex multi-player games, or needs more elaborate discussion. Nevertheless, in this work, as the policies are represented by deep neural networks (DNNs), we use CKA to measure the similarity between two (approximate) Nash policies (the more similar the smaller difference between them), which is well-defined because: (1) CKA is one of the commonly used measures to characterize the similarity between two DNNs, as given in Appendix~\ref{app:cka} and~\citep{kornblith2019similarity}, and (2) As mentioned in Sec.~\ref{sec:experiment}, such a measure is capable of characterizing the intuition: two policies are similar means that their output features (representations) of the corresponding layers (not their parameter vectors) are similar, given the same input data.

\clearpage
\section{More Details on PAPO}
In this section, we provide more details on PAPO and practical implementations.

\subsection{Network Architecture\label{app:net-architecture}}
The embedding layer is a linear layer with 128 neurons. The policy network consists of three fully-connected (FC) layers, each with 128 neurons. The activation between layers is ReLU.

The hypernetwork starts with two FC layers, each with 128 neurons. The activation between layers is ReLU. After that, three independent heads are used to generate the parameters of the three layers of the policy network. In each head, three linear layers are used to map the output $z$ of the two FC layers to the weights, biases, and scaling factors. Such an architecture is similar to~\citep{sarafian2021recomposing,littwin2019deep}. Fig.~\ref{fig:hypernet} shows the architecture of the described hypernetwork. The generated parameters are reshaped to form the structure of the policy network. Take the hidden layer ($l=2$) of the policy network as an example. Suppose that the output from the input layer is $x$ and the weights, biases, and scaling factors generated by the head corresponding to the hidden layer are $\bm{w}_2(z)$, $\bm{b}_2(z)$, and $\bm{g}_2(z)$, respectively. Then, the output to the next layer is $x' = \text{ReLU}(x \bm{w}_2(z) \odot (1+\bm{g}_2(z)) + \bm{b}_2(z))$. To be more specific, in Fig.~\ref{fig:code-snippet} we present the code snippet in the forward process of PAPO. The ``emb'' denotes the embedding of the population size and the batch size is the number of experience tuples (transitions) obtained in every $E$ episodes as the common practice in PPO (see Algorithm~\ref{alg:training-process} for details). The final logits will be used to compute the loss during training or infer the action after passing through a softmax activation during execution.

\begin{figure}[h]
    \centering
    \includegraphics[width=0.5\textwidth]{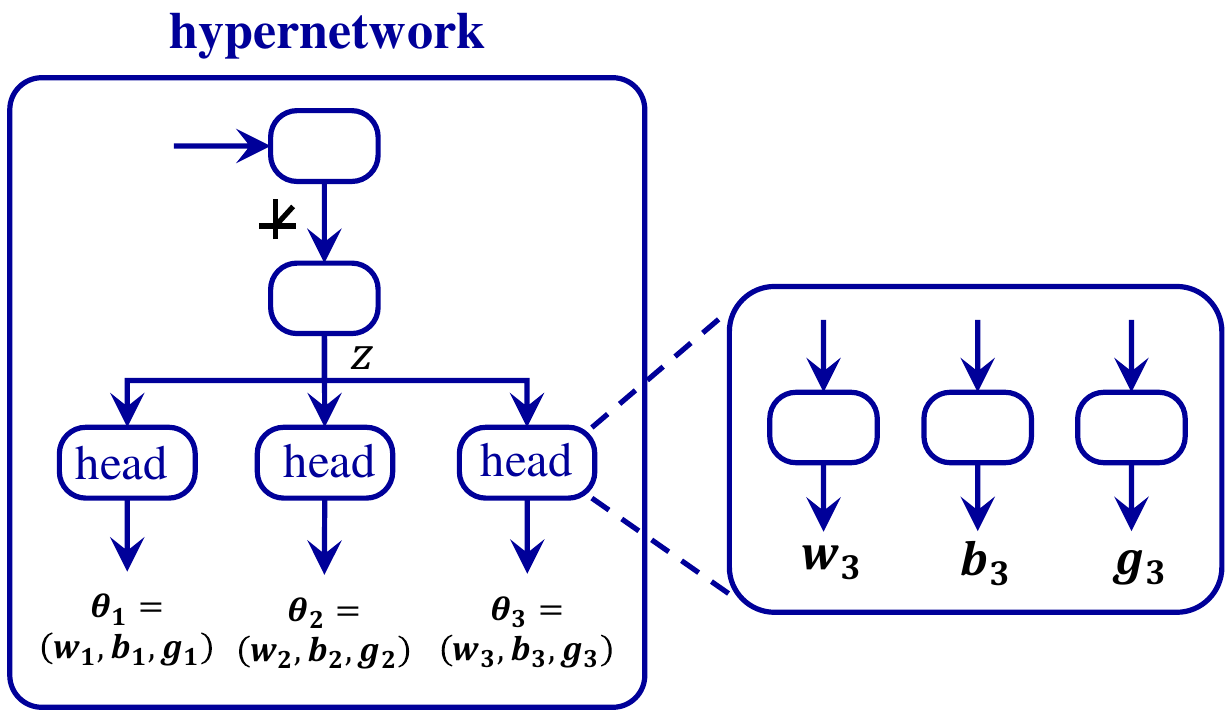}
    \caption{The architecture of hypernetwork.}
    \label{fig:hypernet}
\end{figure}

\begin{figure}[h]
    \centering
    \includegraphics[width=0.9\textwidth]{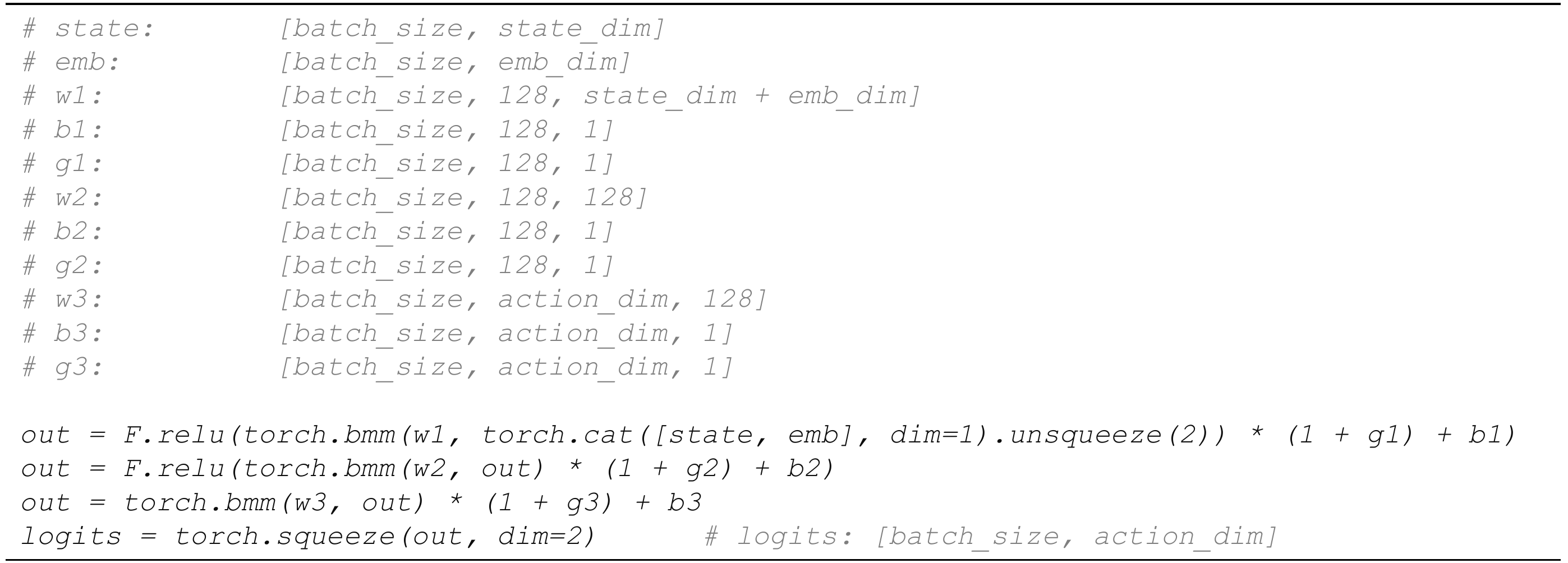}
    \caption{The code snippet in the forward process of PAPO.}
    \label{fig:code-snippet}
\end{figure}

\subsection{Loss Function\label{app:loss-function}}
Let $\pi_{\bm{\theta}}$ and $V_{\bm{\phi}}$ respectively denote a representative agent’s actor and critic ($\pi_{\bm{\theta}_{\text{old}}}$ and $V_{\bm{\phi}_{\text{old}}}$ respectively denote the old version of actor and critic which are periodically updated by copying from $\bm{\theta}$ and $\bm{\phi}$). Similar to PPO~\citep{schulman2017proximal}, the loss function for PAPO is:
\begin{equation}
    L_t(\bm{\beta}^{\text{A}}, \bm{\beta}^{\text{C}}) = \mathbb{E} \Big[ L_t^1(\bm{\theta}=h_{\bm{\beta}^{\text{A}}}(N)) - c_1 L_t^2(\bm{\phi}=h_{\bm{\beta}^{\text{C}}}(N)) + c_2 \mathcal{H}(\pi_{\bm{\theta}=h_{\bm{\beta}^{\text{A}}}(N)})\Big],
\end{equation}
where $h_{\bm{\beta}^{\text{A}}}$ and $h_{\bm{\beta}^{\text{C}}}$ are respectively the hypernetworks for generating the actor and critic networks. For notation simplicity, here we use $\bm{\theta}_N$ and $\bm{\phi}_N$ to represent $\bm{\theta}=h_{\bm{\beta}^{\text{A}}}(N)$ and $\bm{\phi}=h_{\bm{\beta}^{\text{C}}}(N)$, respectively. Then, the three terms in the above equation are:
\begin{equation*}
    \begin{aligned}
    &L_t^1(\bm{\theta}=h_{\bm{\beta}^{\text{A}}}(N)) = \mathbb{E} \Bigg[ \min\Bigg( \frac{\pi_{\bm{\theta}_N}(a_t \vert s_t)}{\pi_{\bm{\theta}_{\text{old}}}(a_t \vert s_t)} A_t, \text{clip}(\frac{\pi_{\bm{\theta}_N}(a_t \vert s_t)}{\pi_{\bm{\theta}_{\text{old}}}(a_t \vert s_t)}, 1 - \epsilon, 1 + \epsilon) A_t \Bigg) \Bigg], \\
    &L_t^2(\bm{\phi}=h_{\bm{\beta}^{\text{C}}}(N)) = \Big( V_{\bm{\phi}_N}(s_t) - \sum_{t' = t}^T r(s_{t'}, a_{t'}, z_{\bm{s}_{t'}}^N) \Big)^2, \\
    &\mathcal{H}(\pi_{\bm{\theta}=h_{\bm{\beta}^{\text{A}}}(N)}(\cdot \vert s_t)) = - \mathbb{E}_{a_t \sim \pi_{\bm{\theta}_N}} \log\pi_{\bm{\theta}_N}(a_t \vert s_t),
\end{aligned}
\end{equation*}
where $A_t$ is the truncated version of generalized advantage estimation:
\begin{equation*}
\begin{aligned}
    & A_t = \delta_t + (\gamma \lambda) \delta_{t+1} + \cdots + (\gamma \lambda)^{T-t} \delta_T \text{ with } \delta_t = r_t + \gamma V_{\bm{\phi}_N}(s_{t+1}) - V_{\bm{\phi}_N}(s_t),
\end{aligned}
\end{equation*}
where $r_t = r(s_t, a_t, z_{\bm{s}_t}^N)$ and $\delta_t$ is the TD error~\citep{sutton2018reinforcement}. The expectation $\mathbb{E}$ is taken on a finite batch of experiences sampled by interacting with the environment. 

\subsection{Training Procedure\label{app:training-procedure}}
We construct a training procedure where the networks of PAPO are trained by using the data sampled from all the games in the target set $G$. Specifically, at the beginning of each episode, a game $G(N)$ is uniformly sampled from the target set $G$. Then, PAPO takes $N$ as input and generates a policy which is used by the agents to interact with the environment for $T$ steps. Next, the $T$ experience tuples of the representative agent are stored in the episode buffer $\mathcal{D}$. Finally, every $E$ episodes, we optimize the surrogate loss function $L_t(\bm{\beta}^{\text{A}}, \bm{\beta}^{\text{C}})$ with respect to $\bm{\beta}^{\text{A}}$ and $\bm{\beta}^{\text{C}}$ (as well as the embedding layers $\bm{\eta}^{\text{A}}$ and $\bm{\eta}^{\text{C}}$) for $K$ epochs with the collected $ET$ experience tuples. As the agents are homogeneous, without loss of generality, we choose $i=1$ as the representative agent. A more detailed description of the training procedure is shown in Algorithm~\ref{alg:training-process}.

\SetAlgoNoLine
\setlength{\algomargin}{1em}
\begin{algorithm}[H]
    \caption{Training Procedure}
    \label{alg:training-process}
    \KwIn{Hyperparameters}
    \KwOut{Trained PAPO}
    $\mathcal{D} \gets \emptyset$\;
    \For{$\text{episode} = 1, 2, \cdots$}{
        Sample a game $G(N) \sim \text{Uniform}[G]$\;
        $s_0^i \gets s_0$, $\forall i \in \mathcal{N}$\;
        Generate a policy $\pi_{\bm{\theta}=h_{\bm{\beta}^{\text{A}}}(N)}$\;
        \For{$0 \leq t \leq T$}{
            Sample action $a_t^i \sim \pi_{\bm{\theta}}(\cdot \vert s_t^i)$, $\forall i \in \mathcal{N}$\;
            Execute $a_t^i$ in $G(N)$, $\forall i \in \mathcal{N}$\;
            Receive reward $r_t = r(s_t^i, a_t^i, z_{\bm{s}_t}^N)$, $\forall i \in \mathcal{N}$\;
            Observe new state $s_{t+1}^i$, $\forall i \in \mathcal{N}$\;
            Store data $\mathcal{D} \gets \mathcal{D} \cup \{(s_t^1, a_t^1, r_t, s_{t+1}^1)\}$\;
        }
        \If{$\text{episode} \% E = 0$}{
            Update $\bm{\beta}^{\text{A}}$ and $\bm{\beta}^{\text{C}}$ (and $\bm{\eta}^{\text{A}}$ and $\bm{\eta}^{\text{C}}$) by optimizing $L_t$ for $K$ epochs\;
            $\mathcal{D} \gets \emptyset$, $\bm{\theta}_{\text{old}} \gets \bm{\theta}$, $\bm{\phi}_{\text{old}} \gets \bm{\phi}$\;
        }
    }
\end{algorithm}

\clearpage
\section{Experimental Details}
\subsection{Details on Game Environments\label{app:env-details}}
\textbf{Exploration}~\citep{lauriere2022scalable}. Consider a grid environment with size $M \times M$. The state of an agent is his coordinate $s_t = (x, y)$ and available actions are: left, right, up, down, and stay. The agent cannot walk through the walls on the boundaries. Given $s_t=(x, y)$, $a_t$, and $\mu_t$, the reward for the agent is $r(s_t, a_t, \mu_t) = -\log(\mu_t(s_t))$. In experiments, we set $M=10$.

\textbf{Taxi Matching}~\citep{nguyen2018credit}. Consider a grid world environment with size $M \times M$. A set of drivers aim to maximize their own revenue by picking orders distributed over the map. As the demand in different zones is varied (e.g., the demand in downtown is higher than that in residential areas), each driver needs to find an optimal roaming policy while taking the competition with other drivers into consideration. The state of a driver is his coordinate $s_t = (x, y)$ and actions are: left, right, up, down, and stay. Given $s_t=(x, y)$, $a_t$, and $\mu_t$, the reward for a driver is $r(s_t, a_t, \mu_t) = - o_{s_t} \log(\mu_t(s_t))$, where $o_{s_t}$ denotes the total order price in state $s_t$. In experiments, we set $M=10$ and consider that there are $100$ orders each with a reward of $1$ and distributed over the map according to the Gaussian distribution with the mean at the center of the map.

\textbf{Crowd in Circle}~\citep{perrin2020fictitious}. Consider a 1D environment with $M=20$ states denoted as $\mathcal{S} = \{1, 2, \cdots, \vert \mathcal{S} \vert = 20\}$ and form a circle. The available actions for an agent are: left, right, and stay. At each time step $t$, there is a point of interest (PoI, for short) located at $\bar{s}_t$. Given $s_t$, $a_t$, and $\mu_t$, the reward for the agent is $r(s_t, a_t, \mu_t) = -\log(\mu_t(s_t)) + 5 \times \bm{1}\{s_t = \bar{s}_t\}$, i.e., the agent gets a reward of 5 when he is located on the PoI while still favoring social distancing. In experiments, we consider two PoIs during an episode: for $t \leq \frac{T}{2}$, $\bar{s}_t = 5$ while for $t > \frac{T}{2}$, $\bar{s}_t = 15$.

\subsection{Algorithm Frameworks and Remarks on Baselines\label{app:baselines}}
In this section, we first introduce the baselines and then present the details of the algorithm frameworks of PAPO and different baselines. 

To facilitate the understanding of the experimental design, we introduce the baseline and give some remarks on them. As the network architecture of PAPO is large (has more learnable parameters), in addition to the standard baselines (PPO and AugPPO), we consider three more baselines: PPO-Large, AugPPO-Large, and HyperPPO (note that HyperPPO has a larger number of learnable parameters than PPO and AugPPO and hence, we regard it as a stronger baseline), which have similar numbers of learnable parameters as PAPO by increasing the number of neurons of the hidden layers of the policy network (PPO-Large and AugPPO-Large) or hypernetwork (HyperPPO). This is critical to ensure a fair comparison and demonstrate the effectiveness of our approach. In Table~\ref{tab:num-para}, we give the numbers of learnable parameters of different methods in different environments. As the dimensions of the input and output of different environments are different, the numbers are different (notice that for the same environment, the numbers for different methods should be kept similar). In Table~\ref{tab:num-neurons}, we give the numbers of neurons used in different methods to obtain a similar number of learnable parameters. Note that for HyperPPO and PAPO, we change the two FC layers of the hypernetwork as the policy network structure is fixed.

Although the methods using RE (e.g., AugPPO w/ RE, AugPPO-Large w/ RE, HyperPPO w/ RE) can be considered as the baselines, we note that they could be weaker than those using BE (AugPPO, AugPPO-Large, HyperPPO) due to the possible training collapse when using RE, as shown in Fig.~\ref{fig:papo-re} and Appendix~\ref{app:effect-of-encoding}. Thus, when presenting the main results obtained in experiments in Fig.~\ref{fig:results-summary}, we focus on the baselines which use BE, which also ensures a fair comparison between them and PAPO (as well as the fair comparison between these baselines). We study the effect of RE on the performance in the ablation study (Fig.~\ref{fig:papo-re} and Appendix~\ref{app:effect-of-encoding}).

In Fig.~\ref{fig:algo-details}, we present the algorithm frameworks of PAPO and different baselines. ``emb'' denotes the embedding of the population size (see Sec.~\ref{sec:pop-size-encode} and Appendix~\ref{app:net-architecture}). 
\begin{itemize}
    \item In PPO/PPO-Large, the actor (critic) takes the agent's state and current time as inputs and outputs the policy (value). PPO-Large operates in the same manner as PPO except that the number of learnable parameters is similar to PAPO.
    \item In AugPPO/AugPPO-Large, the actor (critic) takes the agent's state, current time, and the embedding of population size as inputs and outputs the policy (value). AugPPO-Large operates in the same manner as AugPPO except that the number of learnable parameters is similar to PAPO.
    \item In HyperPPO, we first use the hypernetwork to generate the actor (critic) by taking the embedding of population size as input, then the generated actor (critic) takes the agent's state and current time as inputs and outputs the policy (value).
    \item In PAPO, we first use the hypernetwork to generate the actor (critic) by taking the embedding of population size as input, then the generated actor (critic) takes the agent's state, current time, and the embedding of population size as inputs and outputs the policy (value).
\end{itemize}

\begin{figure}[h]
    \centering
    \includegraphics[width=0.62\textwidth]{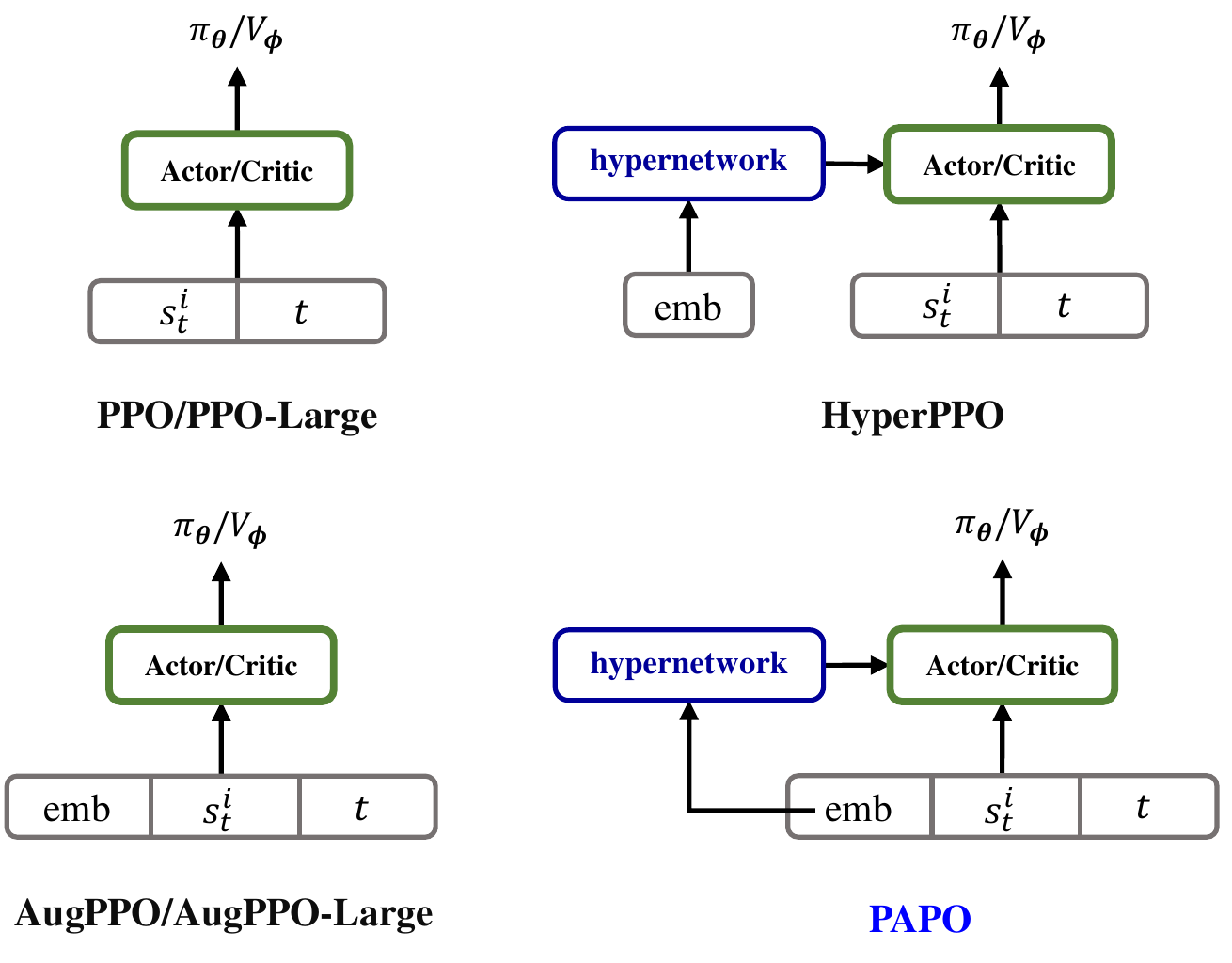}
    \caption{The algorithm frameworks of PAPO and different baselines.}
    \label{fig:algo-details}
\end{figure}

\begin{table}[h]
\caption{The numbers of learnable parameters of different methods in different environments.}
\label{tab:num-para}
\begin{center}
\begin{tabular}{lrrrrrr}
Env./Method  & {\textcolor{gray}{PPO}}  & {\textcolor{gray}{AugPPO}}  & PPO-Large  & AugPPO-Large  & HyperPPO  & PAPO
\\ \hline \\
Exploration     & {\textcolor{gray}{45K}}  & {\textcolor{gray}{81K}}  & 10,159K  & 10,457K  & 10,166K  & 10,176K \\
Taxi Matching   & {\textcolor{gray}{44K}}  & {\textcolor{gray}{80K}}  & 5,936K   & 5,918K   & 5,883K   & 5,893K\\
Crowd in Circle & {\textcolor{gray}{44K}}  & {\textcolor{gray}{80K}}  & 10,146K  & 10,444K  & 9,995K  & 10,076K\\
\end{tabular}
\end{center}
\end{table}

\begin{table}[h]
\caption{The numbers of neurons used in different methods.}
\label{tab:num-neurons}
\begin{center}
\begin{tabular}{lrrrr}
Env./Method  & PPO-Large  & AugPPO-Large  & HyperPPO  & PAPO
\\ \hline \\
Exploration     & (2230, 2230, 2230)  & (2200, 2200, 2200)  & (128, 220)  & (128, 128) \\
Taxi Matching   & (1700, 1700, 1700)  & (1635, 1635, 1635)  & (128, 128)  & (128, 74)\\
Crowd in Circle & (2230, 2230, 2230)  & (2200, 2200, 2200)  & (128, 220)  & (128, 128)\\
\end{tabular}
\end{center}
\end{table}

\subsection{Hyperparameters\label{app:hyper-parameters}}

Table~\ref{tab:hyper-para} lists the parameters used in our experiments. Unless otherwise specified, the parameters listed in Table~\ref{tab:hyper-para} are the same in different game environments. The parameters related to the network architectures can be found in Sec.~\ref{app:net-architecture}. The parameters related to the environments can be found in the previous section. Without loss of generality, we evaluate the performance of different methods in a subset: $G' = \{10, 20, \cdots, 200\}$. As for generalization to unseen games, we evaluate the methods in a set of unseen games: $\tilde{G} = \{220, 240, \cdots, 400\}$. The approximate BR policy is obtained by using (single-agent) PPO. Moreover, all experiments are run on a machine with 20 Intel i9-9820X CPUs and 4 NVIDIA RTX2080 Ti GPUs, and averaged over 3 random seeds.

\begin{table}[ht]
\caption{Hyperparameters.}
\label{tab:hyper-para}
\begin{center}
\begin{tabular}{ll}
Hyperparameter  & Value
\\ \hline \\
optimizer & Adam \\
length of an episode $T$ & 20 \\
minimum number of agents $\barbelow{N}$ & 2 \\
maximum number of agents $\bar{N}$ & 200 \\
maximum number of policy training episodes & $2 \cdot 10^7$ \\
maximum number of BR training episodes & $1 \cdot 10^6$ \\
actor learning rate & $3 \cdot 10^{-5}$ \\
critic learning rate & $3 \cdot 10^{-4}$ \\
update every $E$ episodes & 5 \\
optimize $K$ epochs at each update & 5 \\
critic loss coefficient $c_1$ & 0.5 \\
entropy loss coefficient $c_2$ & 0.01 \\
batch size $m$ for computing CKA & 1000 \\
dimension of binary encoding $k$ & 12 \\
\end{tabular}
\end{center}
\end{table}

\subsection{Approximate NashConv\label{app:approx-nashconv}}
In this section, we provide more details on how to compute the approximate NashConv. Let $\pi_{\bm{\theta}_N}$ be the policy returned by a method (PAPO or other baselines) for the given game $G(N)$. According to the definition given in Sec.~\ref{sec:preliminaries}, the NashConv of this policy is as follows:
\begin{equation}
    \textsc{NashConv}(\pi_{\bm{\theta}_N}) = \sum_{i=1}^N \max_{\hat{\pi}^i} V^i(s^i, \hat{\pi}^i, \{\pi^j = \pi_{\bm{\theta}_N}\}_{j=1, j \ne i}^{N}) - V^i(s^i, \{\pi^i = \pi_{\bm{\theta}_N}\}_{i=1}^{N}).
\end{equation}
As the agents are homogeneous, without loss of generality, we compute the NashConv of a representative agent $i$. Specifically, we first train an approximate BR policy for agent $i$, denoted as $\pi_{\bm{\theta}}^{i, \text{BR}}$ (deriving the exact BR policy for the agent $i$ is typically difficult in multi-player games). Then, we compute the approximate NashConv of the agent $i$ as:
\begin{equation}
    \textsc{NashConv}^i(\pi_{\bm{\theta}_N}) = V^i(s^i, \pi_{\bm{\theta}}^{i, \text{BR}}, \{\pi^j = \pi_{\bm{\theta}_N}\}_{j=1, j \ne i}^{N}) - V^i(s^i, \{\pi^i = \pi_{\bm{\theta}_N}\}_{i=1}^{N}).
\end{equation}
Roughly speaking, the approximate NashConv of agent $i$ is the difference between his value function of following the BR policy $\pi_{\bm{\theta}}^{i, \text{BR}}$ and his value function of following current policy $\pi_{\bm{\theta}_N}$, given that the other agents are fixed to the current policy $\{\pi^j = \pi_{\bm{\theta}_N}\}_{j=1, j \ne i}^{N}$.

\subsection{Similarity Measure: Centered Kernel Alignment\label{app:cka}}
In this work, we use centered kernel alignment (CKA)~\citep{kornblith2019similarity} to measure the similarity between two policies generated by PAPO. Specifically, we aim to compute the similarity measure $\rho(N) = \rho(\pi_{\bm{\theta} = h_{\bm{\beta}}(N)}, \pi_{\bm{\theta} = h_{\bm{\beta}}(N + 1)})$. Under CKA, the process is as follows. We randomly sample a batch of $m$ states and fed them into the two generated policies $\pi_{\bm{\theta} = h_{\bm{\beta}}(N)}$ and $\pi_{\bm{\theta} = h_{\bm{\beta}}(N + 1)}$. Then, we obtain the output of each layer:
\begin{equation}
    X_{\text{input}}^N \in \mathbb{R}^{m \times q_{\text{input}}}, \quad X_{\text{hidden}}^N \in \mathbb{R}^{m \times q_{\text{hidden}}}, \quad X_{\text{output}}^N \in \mathbb{R}^{m \times q_{\text{output}}}
\end{equation}
for $\pi_{\bm{\theta} = h_{\bm{\beta}}(N)}$ and $X_{\text{input}}^{N+1}$, $X_{\text{hidden}}^{N+1}$, and $X_{\text{output}}^{N+1}$ for $\pi_{\bm{\theta} = h_{\bm{\beta}}(N+1)}$, where $q_{\text{input}}$, $q_{\text{hidden}}$, and $q_{\text{output}}$ are respectively the number of neurons of input, hidden, and output layers of the policies. Now, we can measure the similarity of each layer between the two generated policies, i.e., we have $\rho_{\text{input}}(N) = \text{CKA}(X_{\text{input}}^N, X_{\text{input}}^{N+1})$, $\rho_{\text{hidden}}(N) = \text{CKA}(X_{\text{hidden}}^N, X_{\text{hidden}}^{N+1})$, and $\rho_{\text{output}}(N) = \text{CKA}(X_{\text{output}}^N, X_{\text{output}}^{N+1})$. Details on the computation of the CKA can be found in~\citep{kornblith2019similarity}.

\clearpage
\section{More Experimental Results\label{app:more-exp-results}}

In this section, we provide more results and analysis to deepen the understanding of our approach.

\subsection{More Explanations on the Experimental Results\label{app:exp-discuss}}
From Fig.~\ref{fig:results-summary}, Panel A, we can see that AugPPO and HyperPPO, as the two most natural methods, are competitive in terms of performance, but both are weaker than our PAPO. We hypothesize that the augmentation or hypernetwork alone would be individually insufficient. (1) In HyperPPO, the population size information needs to first pass through the hypernetwork (which is typically much larger) before passing to the underlying policy network. This could be inefficient when the gradient of the embedding of the population size backpropagates through the deeper hypernetwork, which is similar to the observation in~\citep{sarafian2021recomposing} where the context gradient did not backpropagate through the hypernetwork. (2) In AugPPO, the population size information is directly augmented to the input of the policy network. However, the policy network is less expressive than the hypernetwork as it is typically much smaller than the hypernetwork. Therefore, by inheriting the merits of the two special cases, PAPO could achieve a better performance. However, as mentioned in Sec.~\ref{sec:pa-architecture}, instead of thoroughly investigating the two special cases and answering the question of which one is better (which could be more involved and outside the scope of our work), we propose a unified framework which encompasses the two options as two special cases.

Given the same training budget as PAPO, PPO-Large and AugPPO-Large cannot always generate approximate Nash policies for games with different population sizes. In this sense, it could be the case that AugPPO-Large could perform worse than PPO-Large as it is more sensitive to the population size. However, we note that we cannot derive the conclusion that AugPPO-Large performs definitely worse than PPO-Large (in ``Crowd in Circle'' environment, AugPPO-Large performs better than PPO-Large in small-scale settings). Further investigating the difference between PPO-Large and AugPPO-Large could be outside the scope of this work.

\subsection{Training Curves\label{app:training-curves}}

In Fig.~\ref{fig:all-envs-train-curve}, we present the training curves of different methods in different environments. From the results, we can see that all methods have converged after a sufficient number of training episodes ($2 \cdot 10^7$). Notice that theoretically computing the exact Nash policy is typically difficult in multi-player games, if not impossible. In this sense, we would expect the methods to return an approximate Nash policy, given that they are trained with a large enough training budget.

During evaluation in a given game, we train a new PPO as the representative agent's BR policy while other agents are fixed to execute the approximate Nash policy. From the representative agent's perspective, the environment is stationary and hence, the problem of learning the BR policy is reduced to an RL problem. In Fig.~\ref{fig:all-envs-br-curve-N-10}, we present the BR training curves of different methods in a given game. As we can see, the BR training curves have approximately converged, ensuring that the computed approximate NashConv is a reasonable metric to assess the quality of learning.

\begin{figure}[ht]
    \centering
    \begin{subfigure}{0.32\textwidth}
        \centering
        \includegraphics[width=\textwidth]{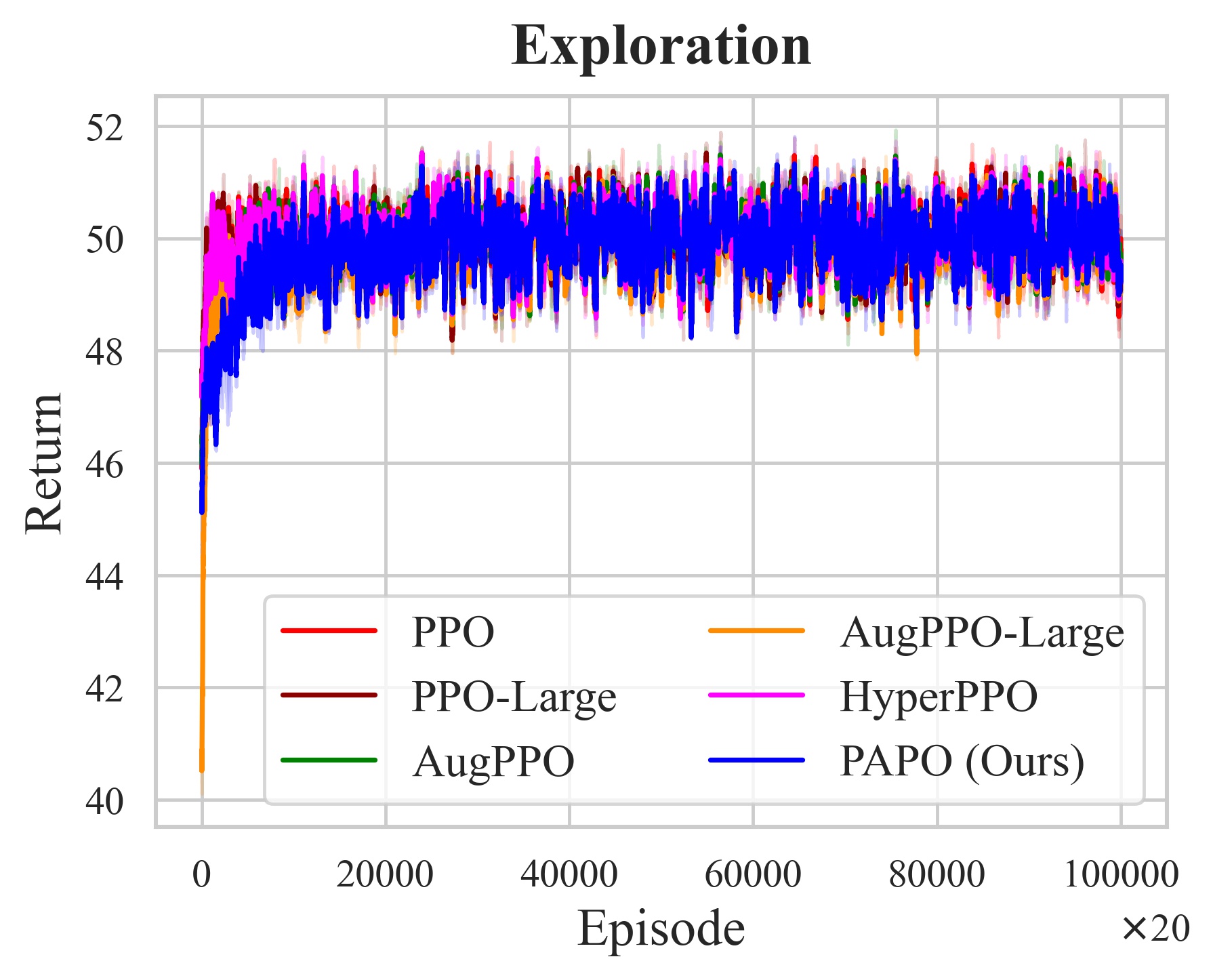}
    \end{subfigure}
    \begin{subfigure}{0.32\textwidth}
        \centering
        \includegraphics[width=\textwidth]{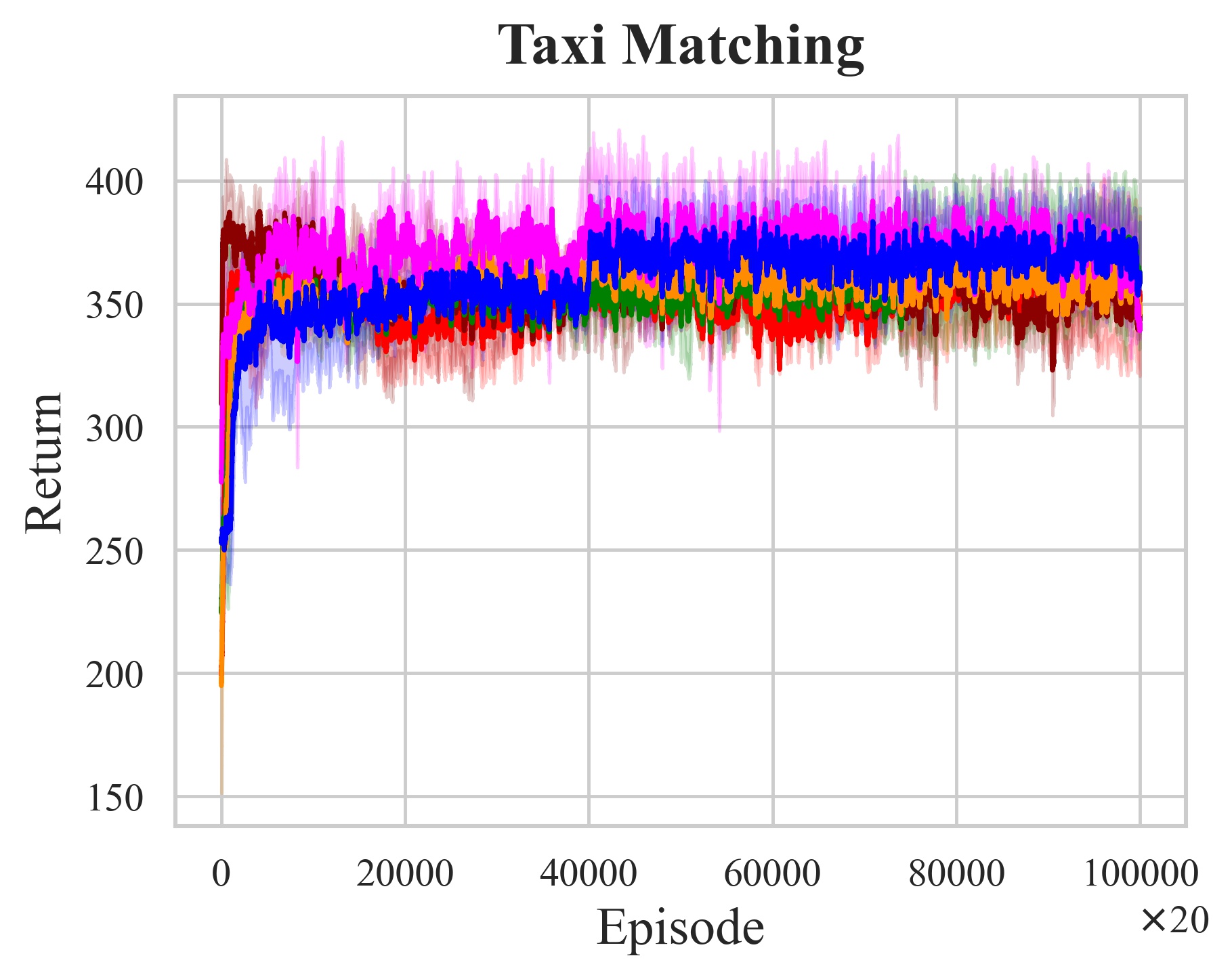}
    \end{subfigure}
    \begin{subfigure}{0.32\textwidth}
        \centering
        \includegraphics[width=\textwidth]{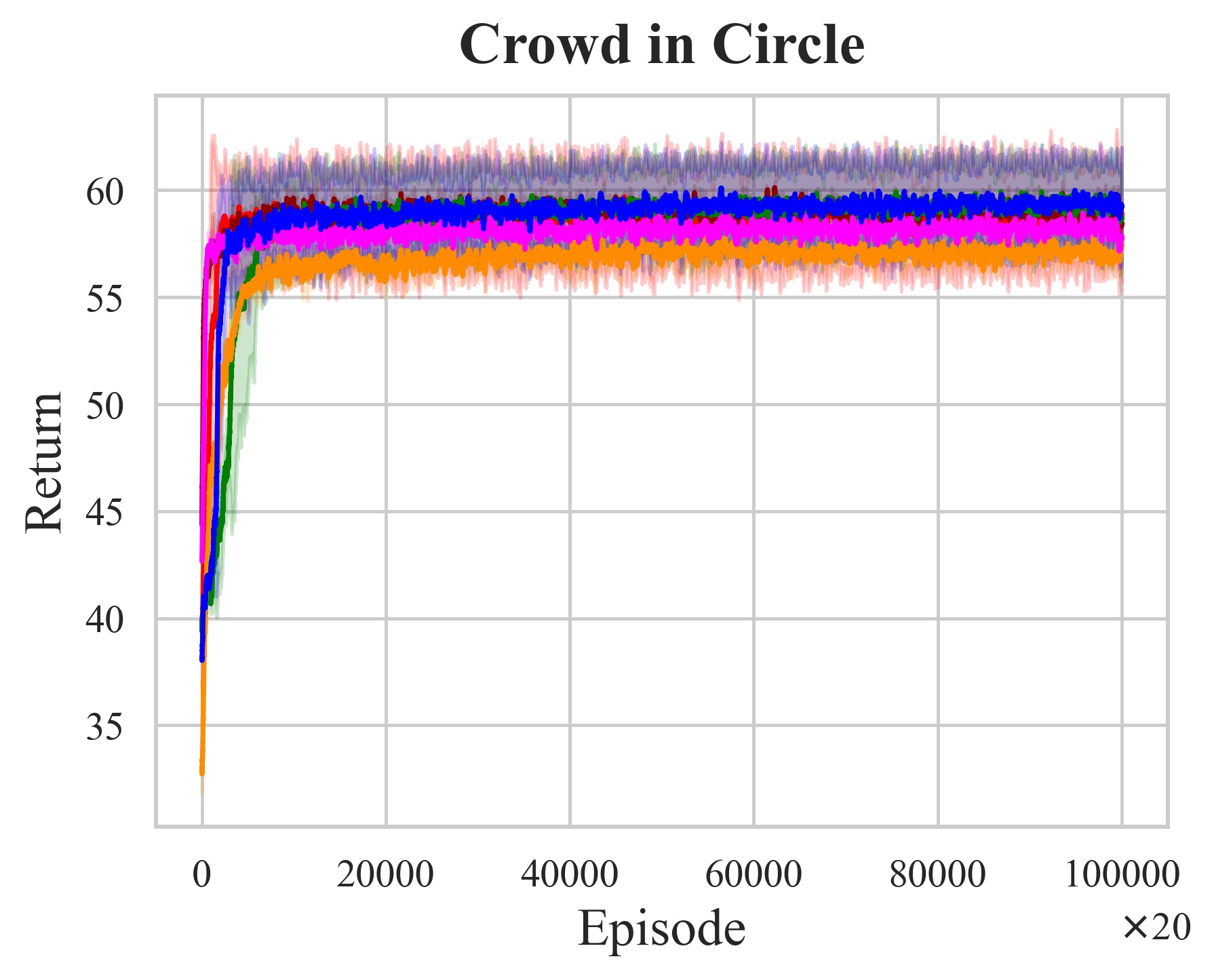}
    \end{subfigure}
\caption{Training curves of different methods in different environments.}
\label{fig:all-envs-train-curve}
\end{figure}

\begin{figure}[ht]
    \centering
    \begin{subfigure}{0.32\textwidth}
        \centering
        \includegraphics[width=\textwidth]{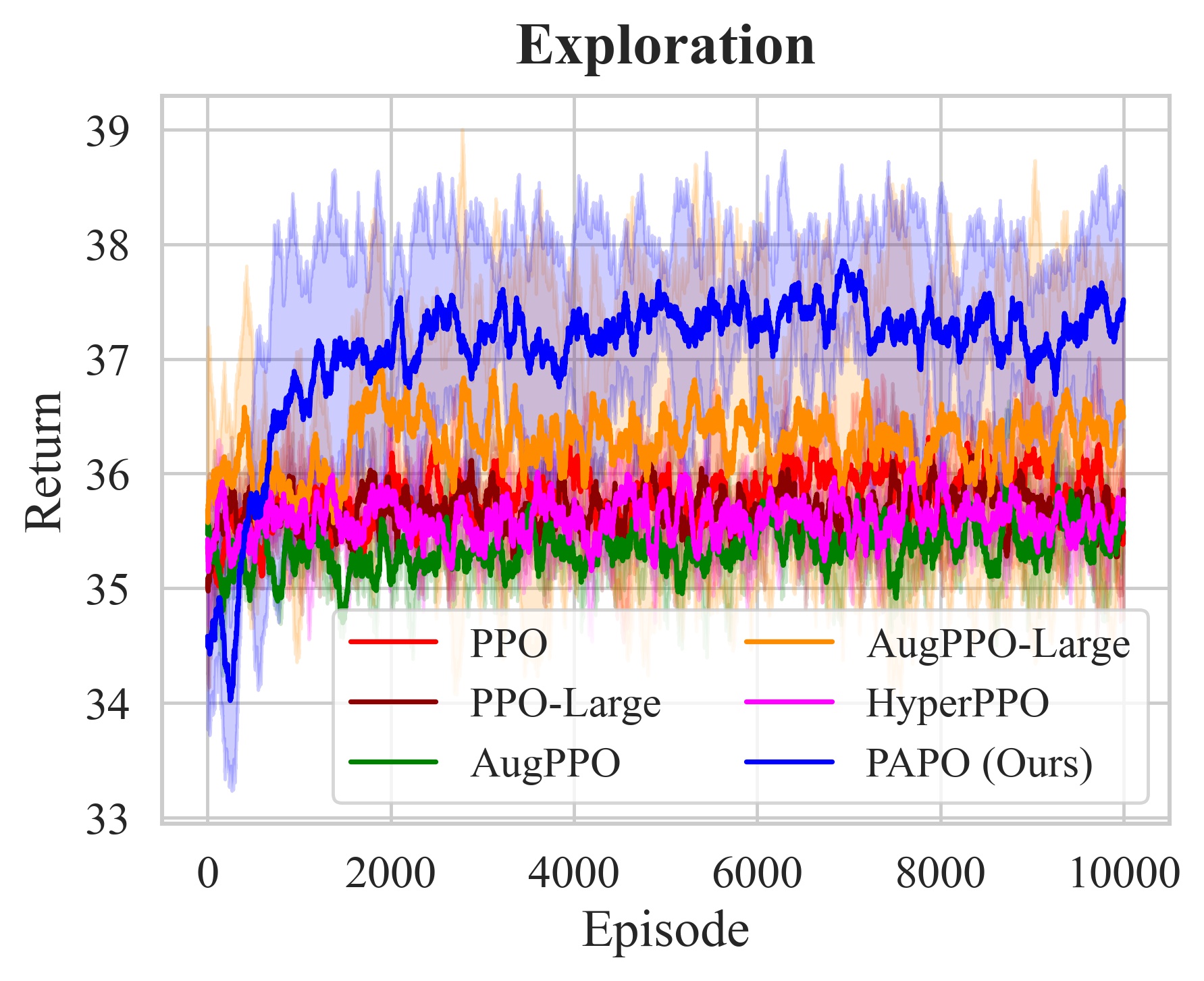}
    \end{subfigure}
    \begin{subfigure}{0.32\textwidth}
        \centering
        \includegraphics[width=\textwidth]{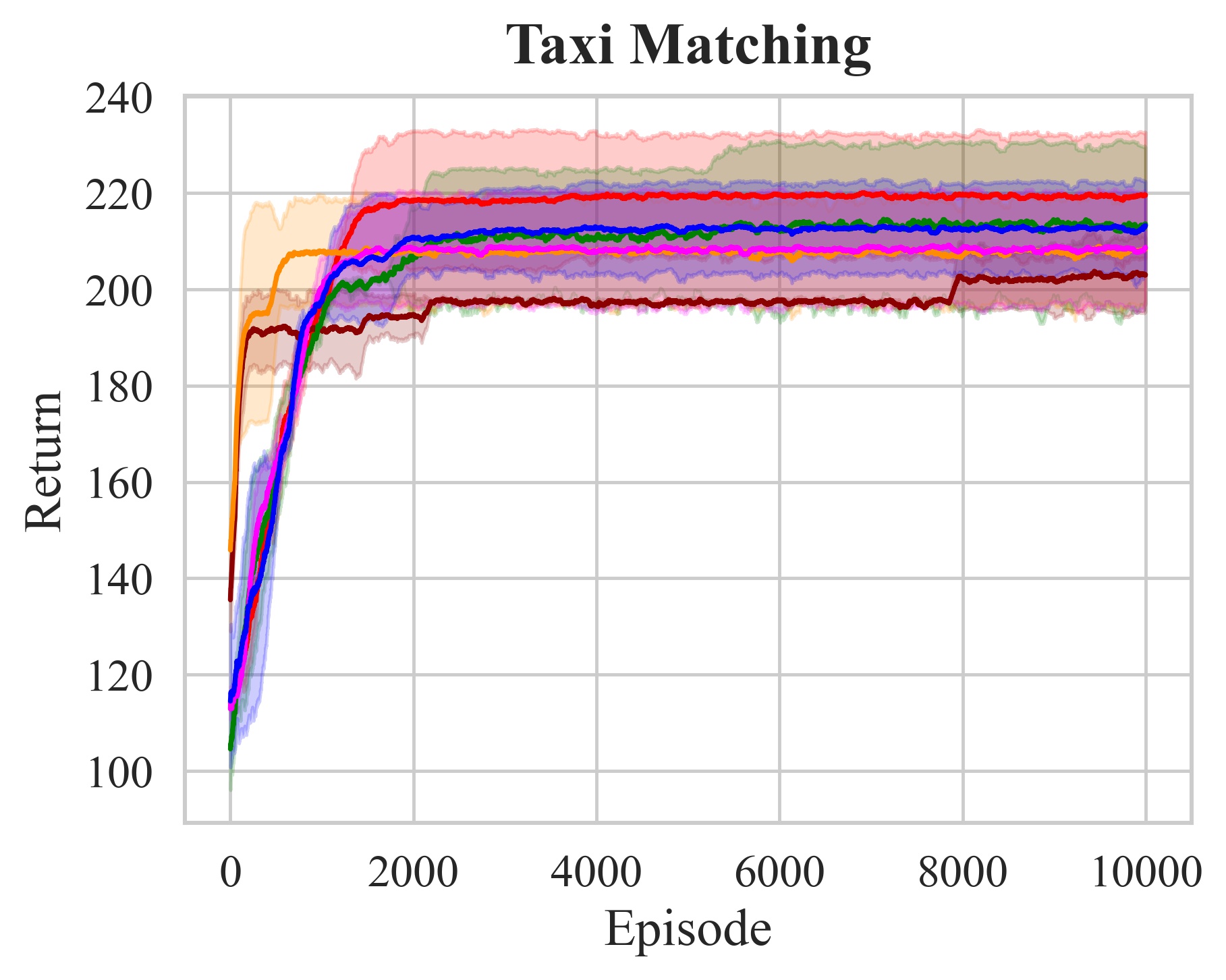}
    \end{subfigure}
    \begin{subfigure}{0.32\textwidth}
        \centering
        \includegraphics[width=\textwidth]{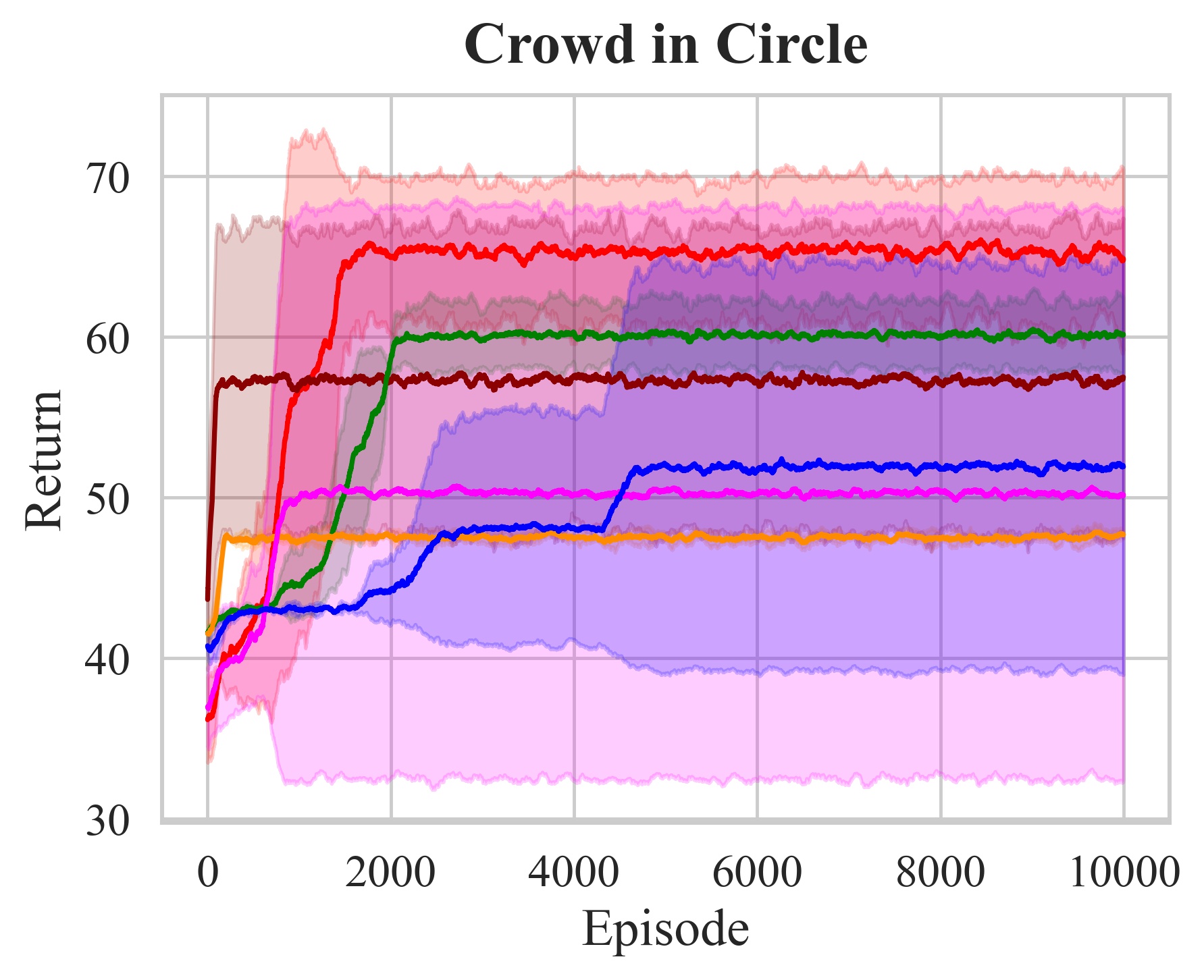}
    \end{subfigure}
\caption{BR training curves of different methods in different environments ($N=10$).}
\label{fig:all-envs-br-curve-N-10}
\end{figure}

\subsection{CKA Analysis\label{app:cka-analysis}}
In this section, we provide a more detailed analysis on the CKA values. In Fig.~\ref{fig:results-summary} (B), we consider the function $\rho(N)$ with the form of $\rho(N) = 1 - \frac{a}{N^b}$. In fact, as mentioned in Sec.~\ref{sec:main-results}, there could be different forms of $\rho(N)$. However, thoroughly investigating all forms of $\rho(N)$ could be outside the scope of this work. Instead, we choose the one with the form of an inverse polynomial because it coincides with the intuition that when $N$ goes to infinity, the similarity between two policies goes to 1 (i.e., upper-bounded by 1). 

Intuitively, the most general case is $\rho(N) = 1 - \frac{a}{c N^b + d}$. However, by computing the p-values, we found that some parameters ($c$ and $d$) are less significant and hence, can be removed from $\rho(N)$. In Fig.~\ref{fig:cka-general-case}, we show the curves and in Table~\ref{tab:curve-fit-param-general-case}, we present the values of the parameters and their p-values. We can see that, in most cases, only the parameter $b$ has a significant influence on the curve fitting, i.e., has a small p-value. As a result, in Fig.~\ref{fig:results-summary} (B), we consider that $\rho(N) = 1 - \frac{a}{N^b}$ and from the results we can see that the two parameters ($a$ and $b$) are statistically significant, which means that this simpler formula is sufficient to quantitatively characterize the evolution of the CKA values over the population size.

Though the results in Fig.~\ref{fig:results-summary} (B) and Fig.~\ref{fig:cka-general-case} provide some intuitions about the evolution of the agents' optimal policies with the population size, they do not rigorously prove the convergence of finite-agent games to MFG. In fact, rigorously proving the convergence could be more involved and maybe only achievable for some special cases such as in~\citep{guo2018stochastic}. Nevertheless, our results provide an empirical verification, which has not been done before when the policies are represented by DNNs.

\begin{figure}[h]
    \centering
    \includegraphics[width=\textwidth]{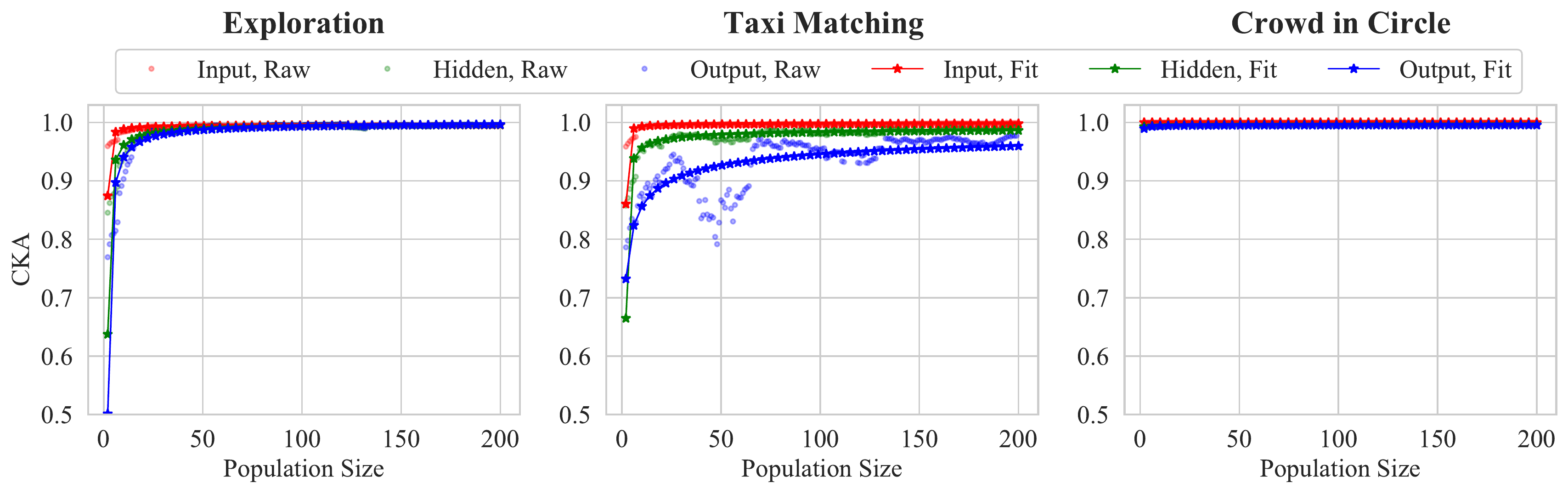}
    \caption{CKA values versus population size. $\rho(N)=1 - \frac{a}{c N^b + d}$.}
    \label{fig:cka-general-case}
\end{figure}

\begin{table}[ht]
\centering
\caption{Values of parameters in $\rho(N)=1 - \frac{a}{c N^b + d}$ in different game environments.}
\label{tab:curve-fit-param-general-case}
\small
\setlength\tabcolsep{4pt}
\begin{tabular}{c c c c c}
\toprule
\multicolumn{5}{c}{\textbf{Exploration}} \\
Layer & $a (p)$ & $b (p)$ & $c (p)$ & $d (p)$ \\
\midrule
{\color{red}{$\rho_{\text{input}}$}} & $3.81e^{-6}$ $(0.97)$ & $3.92e^{-5}$ $(0.97)$ & $4.68e^{+0}$ $(\underline{0.00})$ & $-4.68e^{+0}$ $(\underline{0.00})$ \\
{\color{green}{$\rho_{\text{hidden}}$}} & $4.09e^{-1}$ $(0.99)$ & $6.74e^{-1}$ $(\underline{0.00})$ & $2.98e^{+0}$ $(0.99)$ & $-3.63e^{+0}$ $(0.99)$ \\
{\color{blue}{$\rho_{\text{output}}$}} & $4.22e^{+0}$ $(0.99)$ & $8.93e^{-1}$ $(\underline{0.00})$ & $1.02e^{+1}$ $(0.99)$ & $-1.09e^{+1}$ $(0.99)$ \\
\toprule \\
\toprule
\multicolumn{5}{c}{\textbf{Taxi Matching}} \\
Layer & $a (p)$ & $b (p)$ & $c (p)$ & $d (p)$ \\
\midrule
{\color{red}{$\rho_{\text{input}}$}} & $8.96e^{-3}$ $(0.99)$ & $1.11e^{+1}$ $(\underline{0.09})$ & $5.71e^{+0}$ $(0.99)$ & $-6.09e^{+0}$ $(0.99)$ \\
{\color{green}{$\rho_{\text{hidden}}$}} & $2.90e^{-2}$ $(0.99)$ & $1.28e^{-1}$ $(\underline{0.16})$ & $2.32e^{+0}$ $(0.99)$ & $-2.44e^{+0}$ $(0.99)$ \\
{\color{blue}{$\rho_{\text{output}}$}} & $8.43e^{-1}$ $(0.99)$ & $4.52e^{-1}$ $(\underline{0.02})$ & $1.84e^{+0}$ $(0.99)$ & $+6.26e^{-1}$ $(0.99)$ \\
\toprule \\
\toprule
\multicolumn{5}{c}{\textbf{Crowd in Circle}} \\
Layer & $a (p)$ & $b (p)$ & $c (p)$ & $d (p)$ \\
\midrule
{\color{red}{$\rho_{\text{input}}$}} & $-2.95e^{-4}$ $(0.91)$ & $-7.77e^{-1}$ $(\underline{0.16})$ & $1.75e^{+0}$ $(0.91)$ & $-1.62e^{+0}$ $(0.91)$ \\
{\color{green}{$\rho_{\text{hidden}}$}} & $+1.49e^{-4}$ $(0.99)$ & $+2.65e^{-4}$ $(0.99)$ & $1.46e^{+1}$ $(0.99)$ & $-1.45e^{+1}$ $(0.99)$ \\
{\color{blue}{$\rho_{\text{output}}$}} & $-5.13e^{-2}$ $(0.99)$ & $-3.43e^{-1}$ $(\underline{0.47})$ & $1.05e^{+1}$ $(0.99)$ & $-1.32e^{+1}$ $(0.99)$ \\
\bottomrule
\end{tabular}
\end{table}

\clearpage
\subsection{Representations of Game States\label{app:game-state-representation}}
In this section, we provide more results of the representations of different game states. To this end, we first manually construct some natural and understandable game states according to the position of an agent on the map/circle. For example, the game state ``Bottom-left'' means the agent is located on the bottom-left of the map at any time. In Fig.~\ref{fig:explore-game-state-emb} -- Fig.~\ref{fig:crowd-game-state-emb}, we show the UMAP embeddings of different game states. We found that the UMAP embeddings of some game states (e.g., bottom-left and upper-right in Exploration, bottom-left and bottom-right in Taxi Matching, center in Crowd in Circle) present a similar trend as mentioned in the main text while the UMAP embeddings of other game states do not have clear patterns. We hypothesize that some manually constructed high-level (coarse-grained) game states may consist of very different underlying (fine-grained) states of the agents. On the other hand, to some extent, the results imply that our PAPO may not be a smooth enough function of the population size. In this sense, in future works, proposing novel techniques to drift PAPO toward a smoother function of the population size is worth interesting and may further improve the performance of PAPO, making it a more elegant solution.

\begin{figure}[h]
    \centering
    \includegraphics[width=\textwidth]{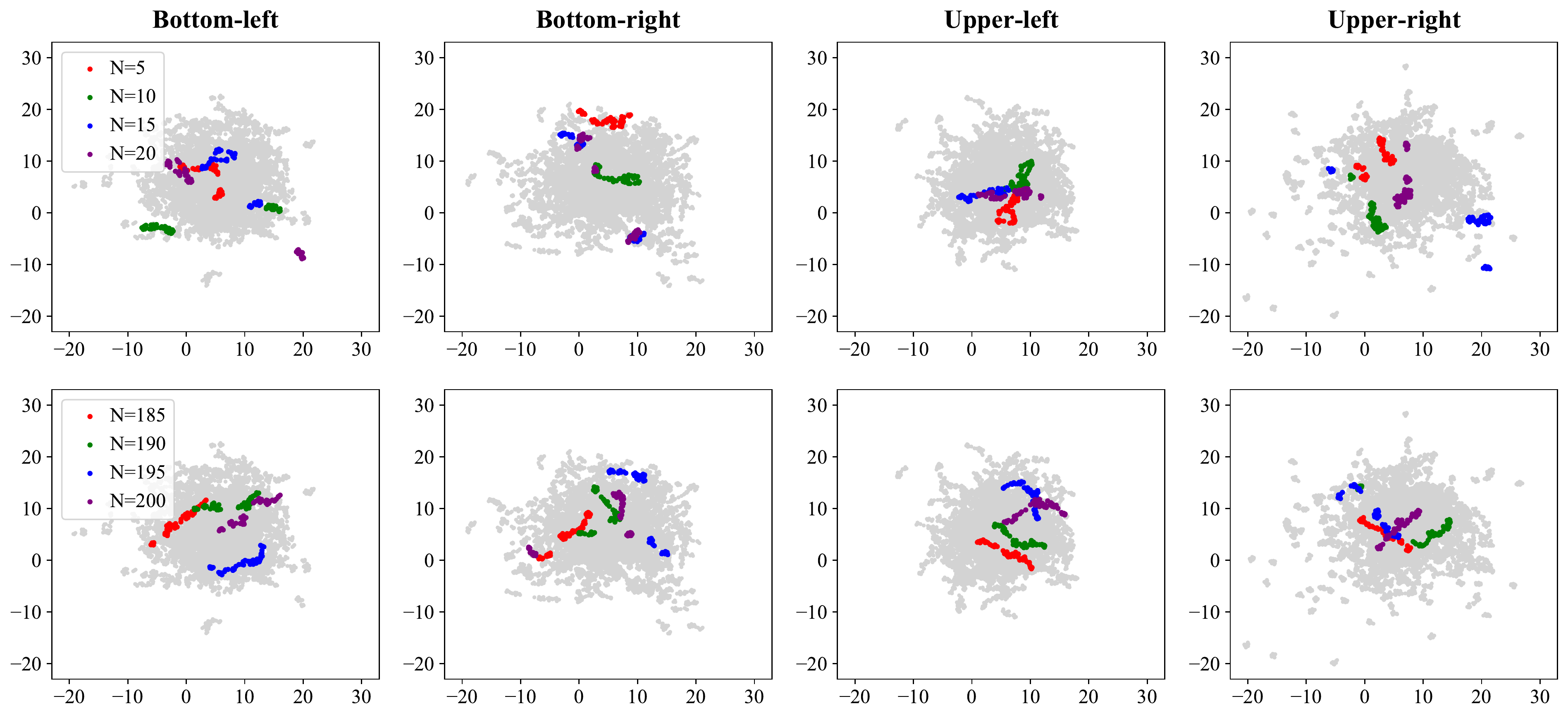}
    \caption{UMAP embeddings of different game states in Exploration.}
    \label{fig:explore-game-state-emb}
\end{figure}

\begin{figure}[ht]
    \centering
    \includegraphics[width=\textwidth]{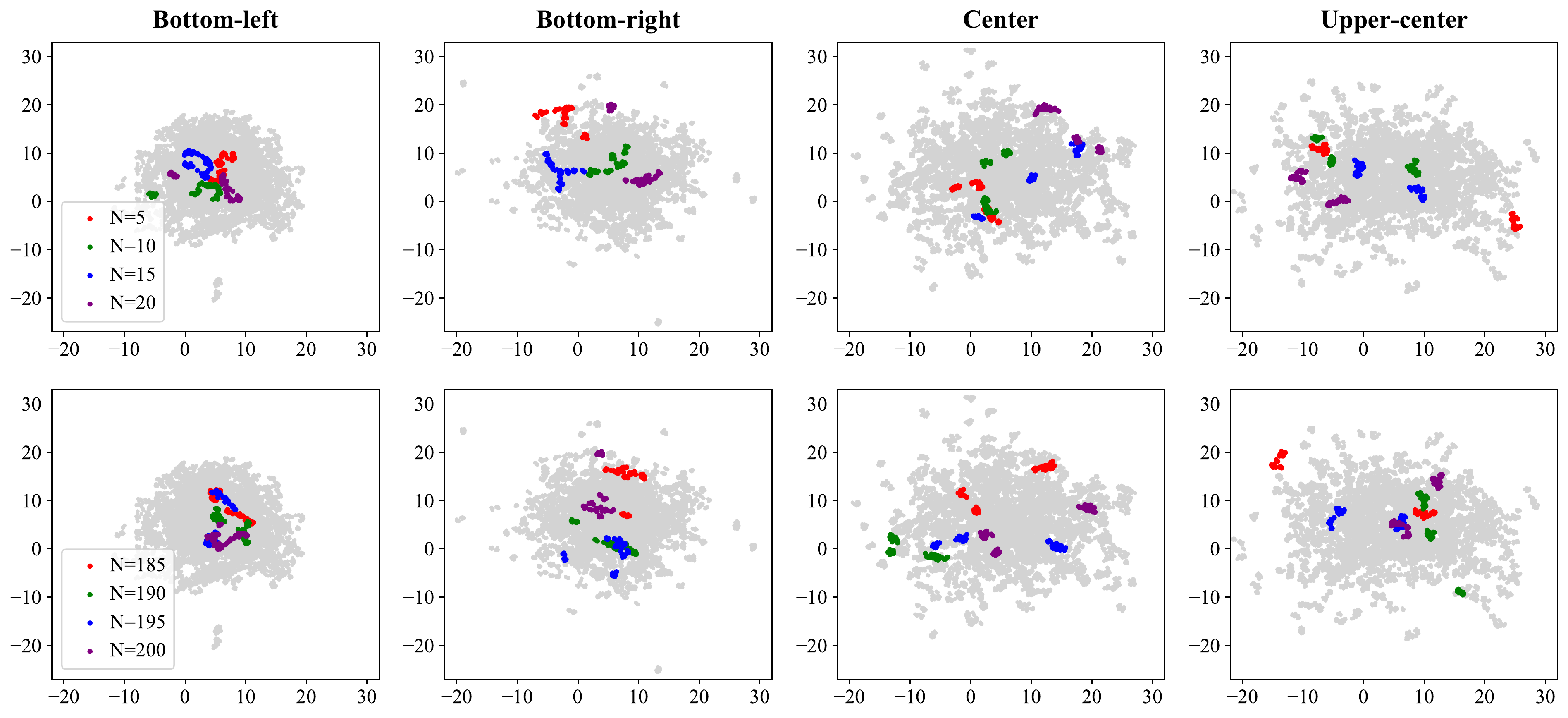}
    \caption{UMAP embeddings of different game states in Taxi Matching.}
    \label{fig:taxi-game-state-emb}
\end{figure}

\begin{figure}[ht]
    \centering
    \includegraphics[width=0.8\textwidth]{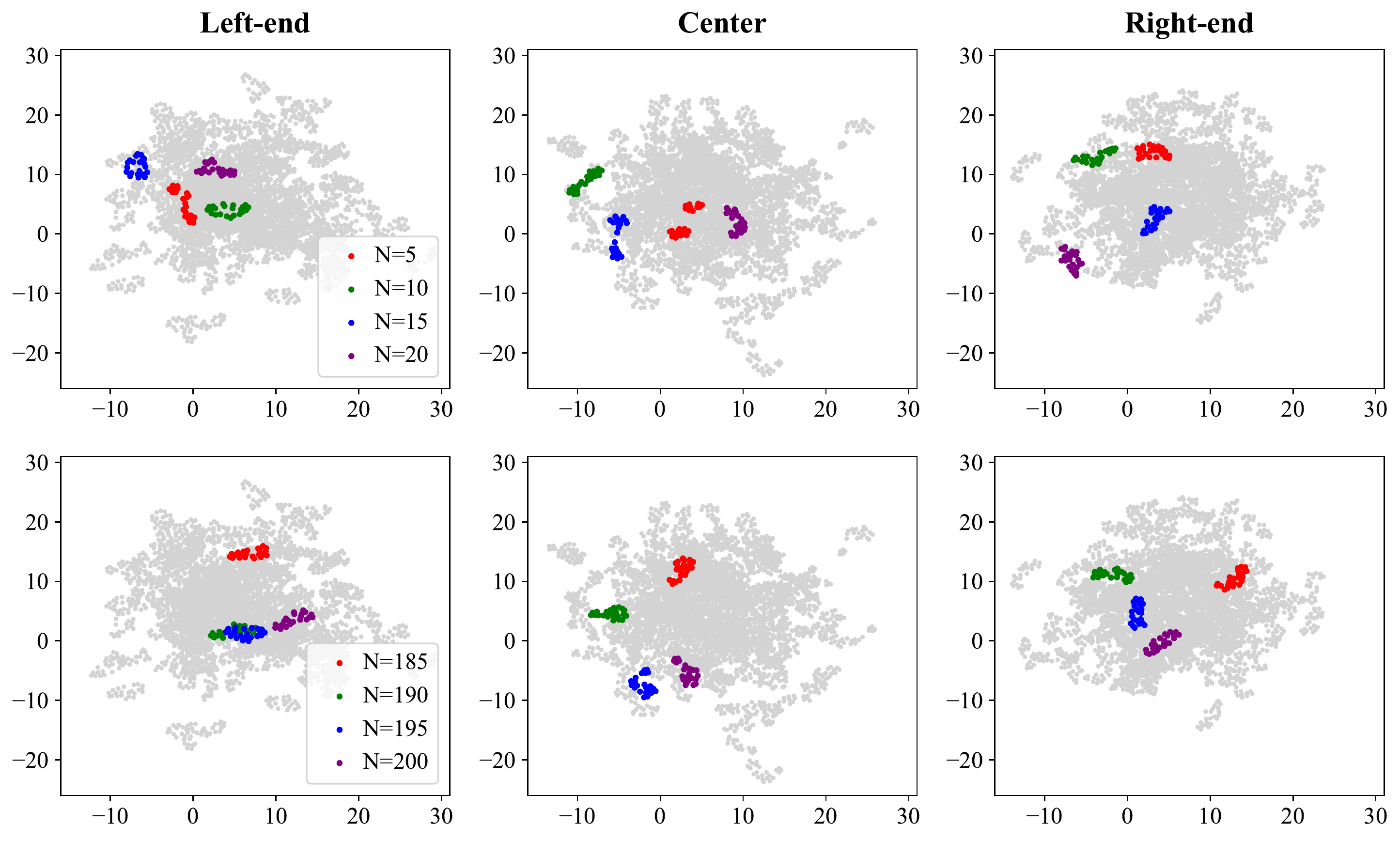}
    \caption{UMAP embeddings of different game states in Crowd in Circle.}
    \label{fig:crowd-game-state-emb}
\end{figure}

\clearpage
\subsection{Effect of Population-size Encoding\label{app:effect-of-encoding}}
In this section, we provide more analysis to explore the effect of population-size encoding on the training process. In Fig.~\ref{fig:papo-re}, we can see that the training of PAPO is collapsed when using RE. Though the approximate NashConv is less meaningful in this case, for completeness, we present the resulting approximate NashConv in Fig.~\ref{fig:taxi-motivation-nc-w-RE}--Fig.~\ref{fig:nc-w-RE}. The results in Fig.~\ref{fig:taxi-motivation-nc-w-RE} correspond to Fig.~\ref{fig:motivation} which considers small-scale settings in Taxi Matching environment. The results in Fig.~\ref{fig:nc-w-RE} correspond to Fig.~\ref{fig:results-summary} which considers larger-scale settings in different environments.

\begin{figure}[h]
    \centering
    \includegraphics[width=0.4\textwidth]{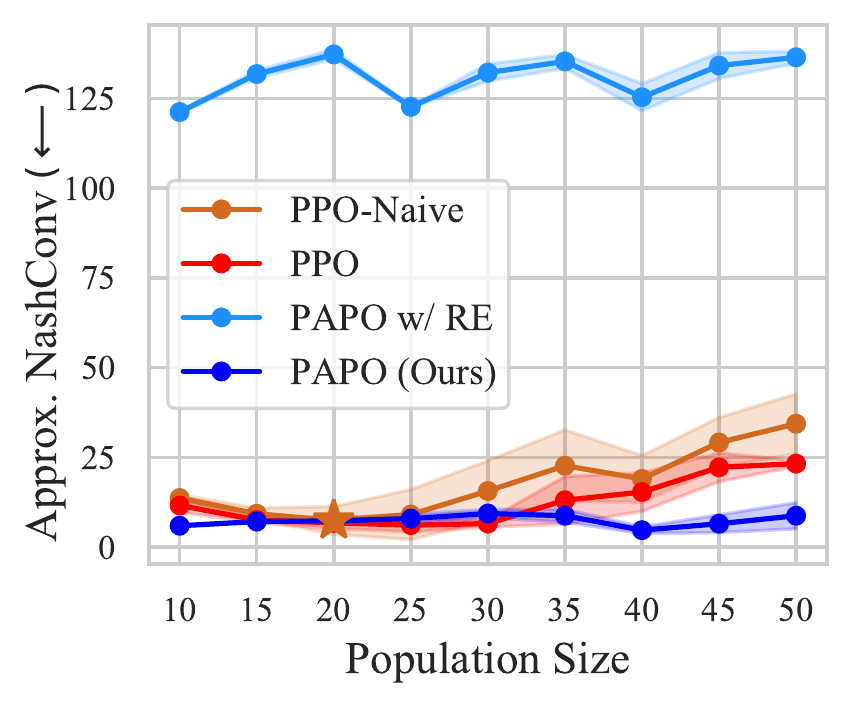}
    \caption{Approximate NashConv versus population size in small-scale settings in Taxi Matching environment.}
    \label{fig:taxi-motivation-nc-w-RE}
\end{figure}

\begin{figure}[h]
    \centering
    \includegraphics[width=0.98\textwidth]{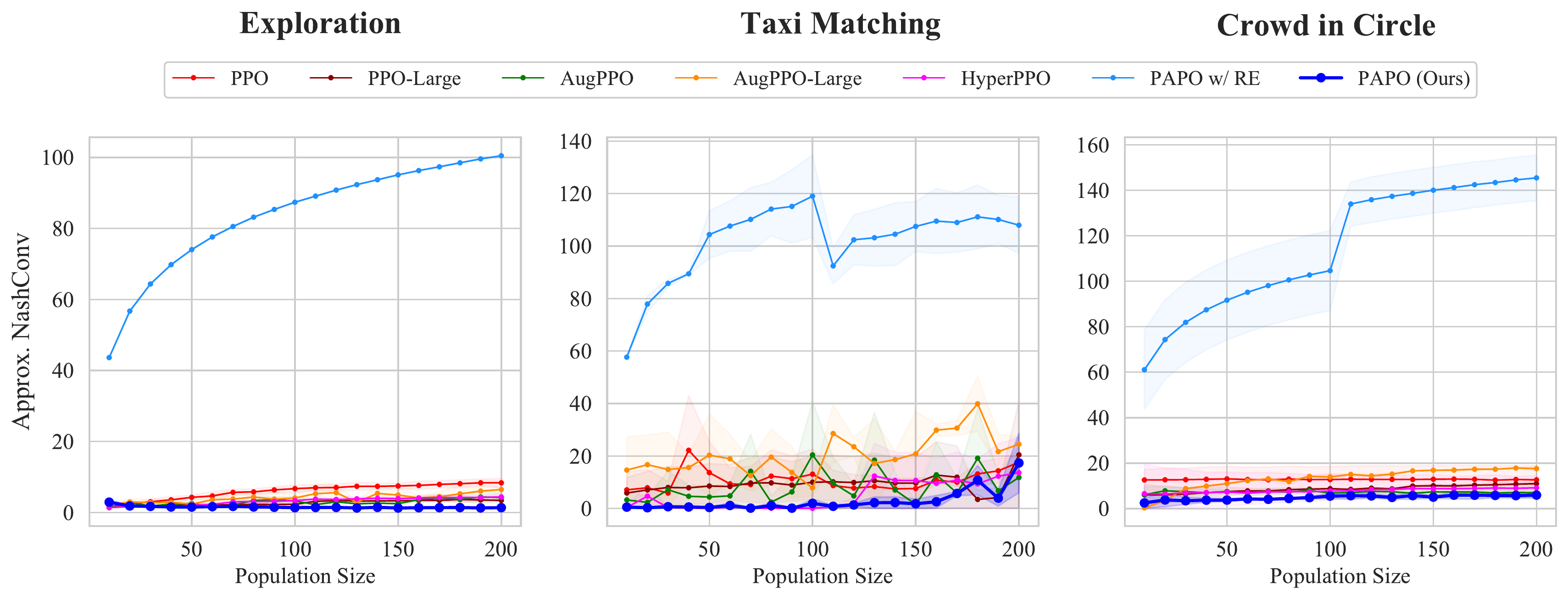}
    \caption{Approximate NashConv versus population size in large-scale settings in different environments.}
    \label{fig:nc-w-RE}
\end{figure}

It is well known that increasing the entropy of a policy at the beginning of training can incentivize it to explore the environment more widely and in turn improve the overall performance of the policy~\citep{haarnoja2018soft,ahmed2019understanding}. In this sense, given the initialized PAPO, then we expect that the generated policies have high entropy, regardless of the input population size. More precisely, given any $N$, the generated policy $\pi_{\bm{\theta}=h_{\bm{\beta}}(N)}$ by the initialized PAPO would output an approximate uniform distribution over the action space $\mathcal{A}$ for a given state $s_t$. Let $\hat{\pi}$ be the uniform policy, i.e., $\hat{\pi}(a \vert s_t) = \frac{1}{\vert \mathcal{A} \vert}$ for all $a \in \mathcal{A}$. We can compute the KL-divergence between $\pi_{\bm{\theta}=h_{\bm{\beta}}(N)}$ and $\hat{\pi}$, denoted as $\kappa(N) = \text{D}_{\text{KL}}(\pi_{\bm{\theta}=h_{\bm{\beta}}(N)} \Vert \hat{\pi})$. Then, intuitively, at the beginning of training, $\kappa(N)$ is near 0 for different $N$'s. After the neural networks of PAPO are well trained, $\kappa(N)$ will be larger than 0 and vary with $N$.

In Fig.~\ref{fig:effect-of-encode-kl-div}, we present the results to verify the aforementioned conclusions. From the results, we can see that, when using RE, the approximate uniform action distribution only holds for small $N$'s, as shown in the first row. However, $\kappa(N)$ quickly increases when $N$ is increasing, which could severely hamper the training of PAPO (see the comparison between ``PAPO w/ RE (initial)'' and ``PAPO w/ RE (trained)''). In striking contrast, using BE can effectively avoid such a training collapse.

\begin{figure}[ht]
    \centering
    \includegraphics[width=\textwidth]{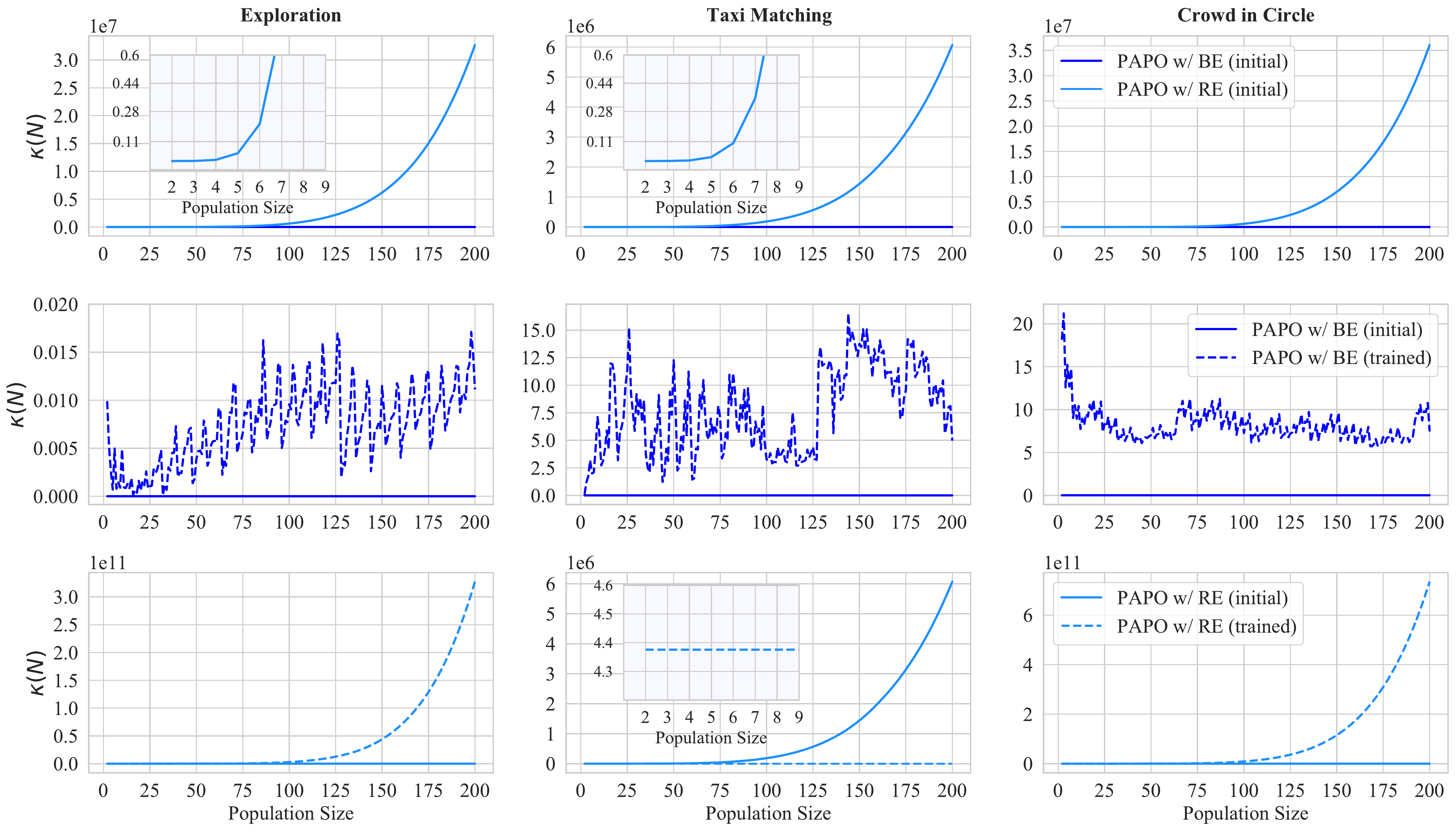}
    \caption{The KL divergence $\kappa$ versus population size.}
    \label{fig:effect-of-encode-kl-div}
\end{figure}

\clearpage
\subsection{Reward Distribution Shift\label{app:reward-dist-shift}}

One of the reasons we hypothesize why PPO performs poorly across different games is the reward distribution shift in the environments. To verify the intuition, for an environment, we use a randomly initialized policy to sample 1,000 trajectories (episodes) and get the reward distribution by statistics. As shown in Fig.~\ref{fig:reward-dist-exploration}--Fig.~\ref{fig:reward-dist-crowd}, the reward distribution varies sharply with the increasing population size. Intuitively, PPO works poorly across the games as it does not consider the increasing diversity of reward signals resulting from the changes in population size. In contrast, PAPO (and other population-size-aware methods) possesses the ability to work consistently well across different games by explicitly taking the information of the population size into account during training.

\begin{figure}[h]
    \centering
    \includegraphics[width=\textwidth]{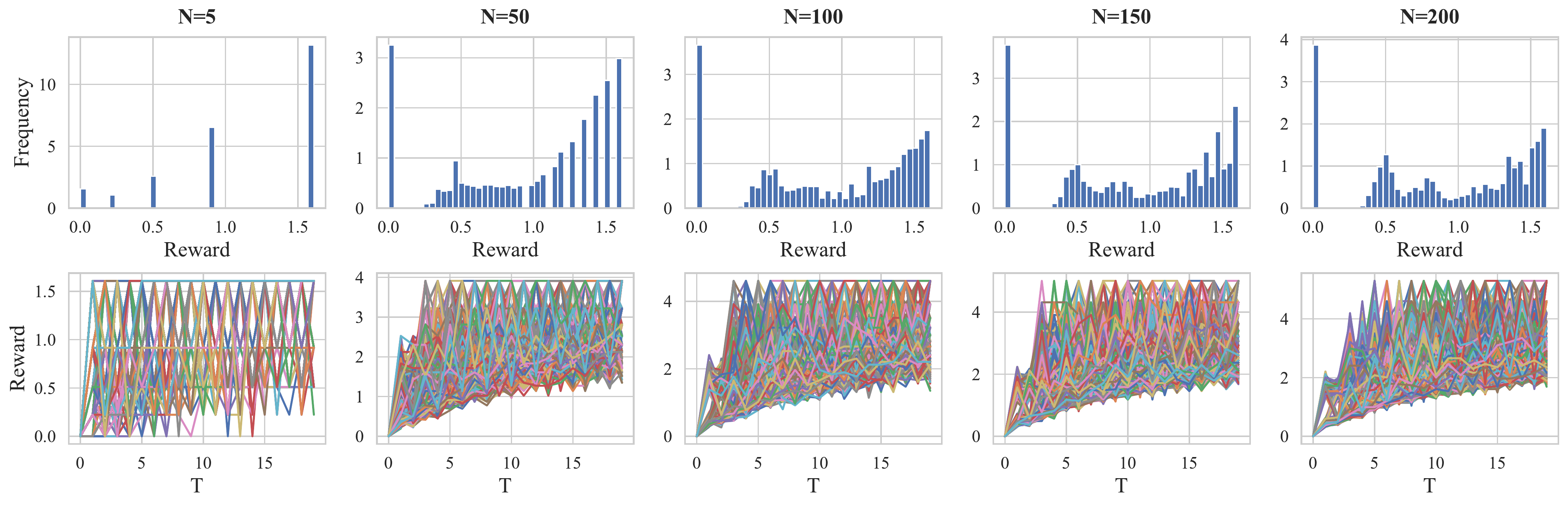}
    \caption{Sampled reward distributions and trajectories in Exploration.}
    \label{fig:reward-dist-exploration}
\end{figure}

\begin{figure}[ht]
    \centering
    \includegraphics[width=\textwidth]{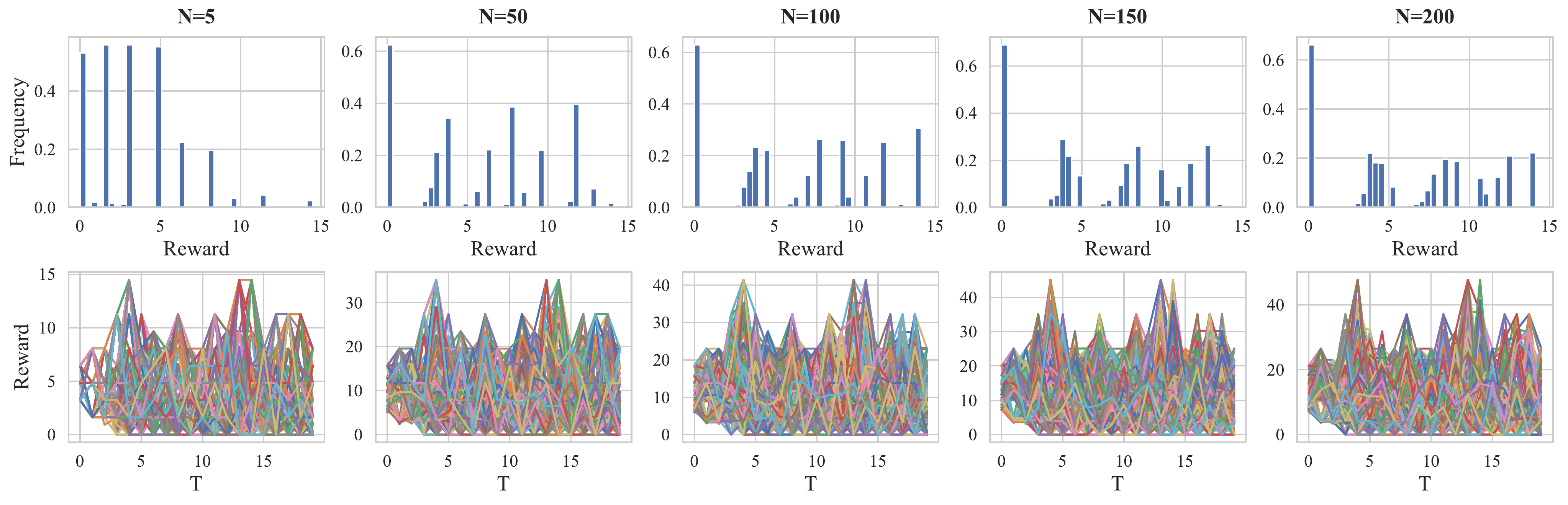}
    \caption{Sampled reward distributions and trajectories in Taxi Matching.}
    \label{fig:reward-dist-taxi}
\end{figure}

\begin{figure}[ht]
    \centering
    \includegraphics[width=\textwidth]{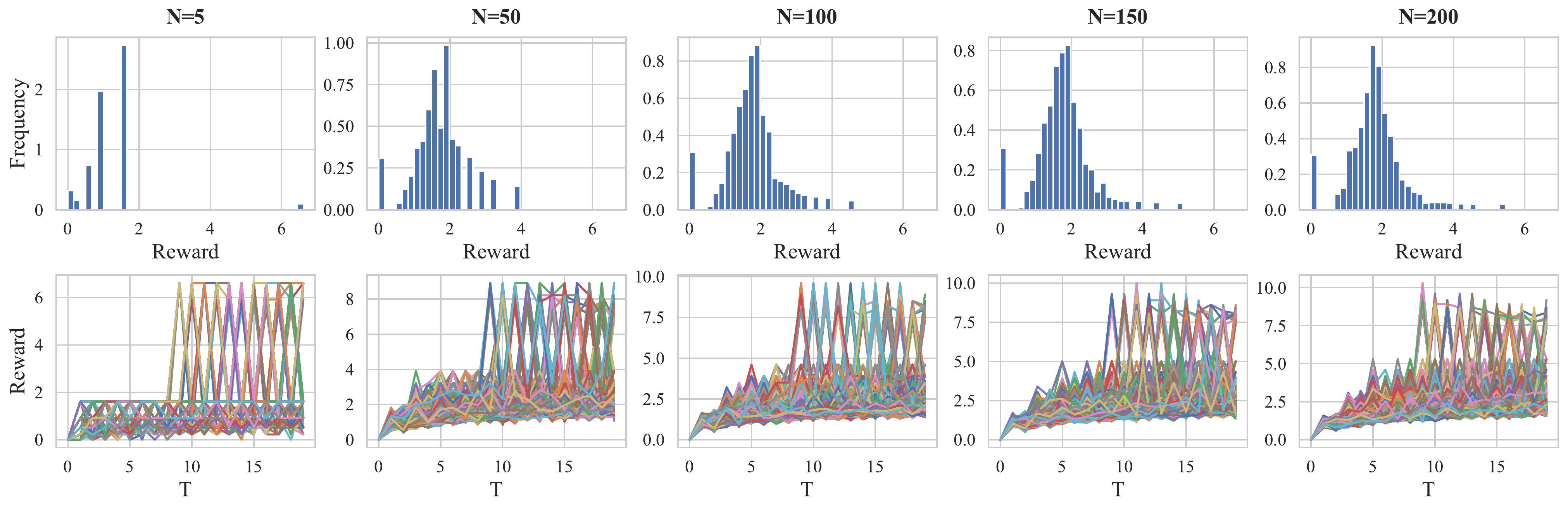}
    \caption{Sampled reward distributions and trajectories in Crowd in Circle.}
    \label{fig:reward-dist-crowd}
\end{figure}

\end{document}